\date{}
\def\thm@space@setup{\thm@preskip=0pt
\thm@postskip=0pt}
\newtheorem{theorem}{Theorem}
\newtheorem{lemma}{Lemma}
\newtheorem{problem}{Problem}
\newtheorem{definition}{Definition}
\newcommand{\fullFigGap}[0]{\vspace{-1.5\baselineskip}} 
\DeclareMathOperator*{\argmin}{arg\,min}
\DeclareMathOperator*{\argmax}{arg\,max}
\newcommand{\argmaxprob}[1]{\argmax\limits_{#1}}
\newcommand{\abs}[1]{\left|#1 \right|}
\newcommand{\card}[1]{\left|#1\right|}
\DeclareMathOperator*{\suchthat}{\;\; \mbox{s.t.} \;\;}
\newcommand{\expect}[2]{\mathbb{E}_{#1}\left[#2\right]}
\newcommand{\real}[0]{\mathbb{R}}
\newcommand{\bbm}{\begin{bmatrix}}
\newcommand{\ebm}{\end{bmatrix}}
\newcommand{\pair}[2]{\left( #1, #2\right)}
\newcommand{\seq}[2]{\left(#1_{1}, #1_{2}, \ldots, #1_{#2}\right)}
\newcommand{\setst}[2]{\left\lbrace #1\;\;\middle|\;\;#2\right\rbrace}
\newcommand{\overbar}[1]{\mkern 1.5mu\overline{\mkern-1.5mu#1\mkern-1.5mu}\mkern 1.5mu}
\newcommand{\vertexSet}[0]{\mathcal{V}}
\newcommand{\vertex}[0]{v}
\newcommand{\vertexGroup}[0]{V}
\newcommand{\vertexStart}[0]{\vertex_s}
\newcommand{\Path}[0]{\xi}
\newcommand{\PathSet}[0]{\Xi}
\newcommand{\world}[0]{\phi}
\newcommand{\worldTwo}[0]{\world'}
\newcommand{\worldSet}[0]{\mathcal{M}}
\newcommand{\meas}[0]{y}
\newcommand{\measSet}[0]{\mathcal{Y}}
\newcommand{\measFnDef}[0]{\mathcal{H}}
\newcommand{\measFn}[2]{\measFnDef\left(#1, #2\right)}
\newcommand{\utilityFnDef}[0]{\mathcal{F}}
\newcommand{\utilityFn}[2]{\utilityFnDef\left(#1, #2\right)}
\newcommand{\marginalGain}[2]{\Delta_\utilityFnDef\left(#1, #2\right)}
\newcommand{\costFnDef}[0]{\mathcal{T}}
\newcommand{\costFn}[2]{\costFnDef\left(#1, #2\right)}
\newcommand{\costBudget}[0]{B}
\newcommand{\adaptivePath}[0]{\sigma}
\newcommand{\adaptivePathGreedy}[0]{\adaptivePath_{\mathrm{greedy}}}
\newcommand{\state}[0]{s}
\newcommand{\stateSet}[0]{\mathcal{S}}
\newcommand{\action}[0]{a}
\newcommand{\actionSet}[0]{\mathcal{A}}
\newcommand{\actionSetFeas}[2]{\actionSet_{\mathrm{feas}}\left(#1, #2\right)}
\newcommand{\transFnDef}[0]{\Omega}
\newcommand{\transFn}[3]{\transFnDef{}\left(#1, #2, #3\right)}
\newcommand{\rewardFnDef}[0]{R}
\newcommand{\rewardFn}[3]{\rewardFnDef{}\left(#1, #2, #3\right)}
\newcommand{\rewardFnAgg}[3]{C\left(#1, #2, #3\right)}
\newcommand{\policy}[0]{\pi}
\newcommand{\policySet}[0]{\Pi}
\newcommand{\valueFn}[2]{V^{#1}_{#2}}
\newcommand{\valueFnAgg}[2]{\overbar{V}^{#1}_{#2}}
\newcommand{\QFn}[2]{Q^{#1}_{#2}}
\newcommand{\QFnAgg}[2]{\overbar{Q}^{#1}_{#2}}
\newcommand{\valuePol}[1]{J\left(#1\right)}
\newcommand{\valuePolAgg}[1]{\overbar{J}\left(#1\right)}
\newcommand{\obs}[0]{o}
\newcommand{\obsSet}[0]{\mathcal{O}}
\newcommand{\obsFnDef}[0]{Z}
\newcommand{\obsFn}[4]{\obsFnDef{}\left(#1, #2, #3, #4\right)}
\newcommand{\belief}[0]{\psi}
\newcommand{\policyBel}[0]{\tilde{\pi}}
\newcommand{\policySetBel}[0]{\tilde{\Pi}}
\newcommand{\valueFnBel}[2]{\tilde{V}^{#1}_{#2}}
\newcommand{\QFnBel}[2]{\tilde{Q}^{#1}_{#2}}
\newcommand{\dataset}[0]{\mathcal{D}}
\newcommand{\policyLEARN}[0]{\hat{\pi}}
\newcommand{\numLearnIter}[0]{N}
\newcommand{\mixfrac}[1]{\beta_{#1}}
\newcommand{\numDatapoints}[0]{m}
\newcommand{\policyOR}[0]{\pi_{\mathrm{OR}}}
\newcommand{\policyMix}[0]{\pi_{\mathrm{mix}}}
\newcommand{\policyMDP}[0]{\pi_{\mathrm{MDP}}}
\newcommand{\policyORBel}[0]{\tilde{\pi}_{\mathrm{OR}}}
\newcommand{\QVal}[0]{Q}
\newcommand{\lossFnPolicyDef}[0]{\mathcal{L}}
\newcommand{\lossFnPolicy}[3]{\lossFnPolicyDef{}\left( #1, #2, #3\right)}
\newcommand{\errclass}[0]{\varepsilon_{\mathrm{class}}}
\newcommand{\errclassAgg}[0]{\overbar{\varepsilon}_{\mathrm{class}}}
\newcommand{\errreg}[0]{\varepsilon_{\mathrm{reg}}}
\newcommand{\errregAgg}[0]{\overbar{\varepsilon}_{\mathrm{reg}}}
\newcommand{\lossi}[0]{l_i}
\newcommand{\lossiAgg}[0]{\overbar{l_i}}
\newcommand{\feature}[0]{f}
\newcommand{\featureIG}[0]{\feature{}_{\mathrm{IG}}}
\newcommand{\featureMot}[0]{\feature{}_{\mathrm{mot}}}
\newcommand{\etal}[0]{et al.\xspace}
\newcommand{\aggrevate}[0]{\textsc{AggreVaTe}\xspace}
\newcommand{\Dagger}[0]{\textsc{DAgger}\xspace}
\newcommand{\FT}[0]{\textsc{ForwardTraining}\xspace}
\newcommand{\RearSideVoxel}[0]{Rear Side Voxel\xspace}
\newcommand{\AverageEntropy}[0]{Average Entropy\xspace}
\newcommand{\knownunc}[0]{\textsc{Known-Unc}\xspace}
\newcommand{\knowncon}[0]{\textsc{Known-Con}\xspace}
\newcommand{\hiddenunc}[0]{\textsc{Hidden-Unc}\xspace}
\newcommand{\hiddencon}[0]{\textsc{Hidden-Con}\xspace}
\newcommand{\algRewFT}[0]{\textsc{RewardFT}\xspace}
\newcommand{\algQvalFT}[0]{\textsc{QvalFT}\xspace}
\newcommand{\algRewAgg}[0]{\textsc{RewardAgg}\xspace}
\newcommand{\algQvalAgg}[0]{\textsc{QvalAgg}\xspace}
\title{\LARGE \bf
Adaptive Information Gathering via Imitation Learning
}
\author{\authorblockN{Sanjiban Choudhury\authorrefmark{1},
Ashish Kapoor\authorrefmark{2},
Gireeja Ranade\authorrefmark{2}, 
Sebastian Scherer\authorrefmark{1} and
Debadeepta Dey\authorrefmark{2}}
\authorblockA{\authorrefmark{1}The Robotics Institute\\
Carnegie Mellon University,
Pittsburgh, PA 15232\\ Email: \{sanjibac,basti\}@andrew.cmu.edu}
\authorblockA{\authorrefmark{2}Microsoft Research, Redmond USA\\
Email: \{akapoor,giranade,dedey\}@microsoft.com}}
\begin{document}

\maketitle


\begin{abstract}
In the adaptive information gathering problem, a policy is required to select an informative sensing location using the history of measurements acquired thus far. 
While there is an extensive amount of prior work investigating effective practical approximations using variants of Shannon's entropy, the efficacy of such policies heavily depends on the geometric distribution of objects in the world. 
On the other hand, the principled approach of employing online POMDP solvers is rendered impractical by the need to \emph{explicitly} sample online from a posterior distribution of world maps. 

We present a novel data-driven imitation learning framework to efficiently train information gathering policies. 
The policy imitates a \emph{clairvoyant oracle} - an oracle that at train time has full knowledge about the world map and can compute maximally informative sensing locations. 
We analyze the learnt policy by showing that offline imitation of a clairvoyant oracle is \emph{implicitly} equivalent to online oracle execution in conjunction with posterior sampling. 
This observation allows us to obtain powerful near-optimality guarantees for information gathering problems possessing an adaptive sub-modularity property. 
As demonstrated on a spectrum of 2D and 3D exploration problems, the trained policies enjoy the best of both worlds - they adapt to different world map distributions while being computationally inexpensive to evaluate.
\end{abstract}

\section{Introduction}

This paper \footnote{This work was conducted by Sanjiban Choudhury as part of a summer internship at Microsoft Research, Redmond, USA.} examines the following information gathering problem - given a hidden world map, sampled from a prior distribution, the goal is to successively visit sensing locations such that the amount of relevant information uncovered is maximized while not exceeding a specified fuel budget. This problem fundamentally recurs in mobile robot applications such as autonomous mapping of environments using ground and aerial robots \cite{Charrow-RSS-15,heng2015efficient}, monitoring of water bodies \cite{hollinger2013sampling} 
and inspecting models for 3D reconstruction \cite{isler2016information,hollinger2011active}.

The nature of ``interesting'' objects in an environment and their spatial distribution influence the optimal trajectory a robot might take to explore the environment. As a result, it is important that a robot learns about the type of environment it is exploring as it acquires more information and adapts its exploration trajectories accordingly. This adaptation must be done online, and we provide such algorithms in this paper.

Consider a robot equipped with a sensor (RGBD camera) that needs to generate a map of an unknown environment. It is given a prior distribution about the geometry of the world - such as a distribution over narrow corridors, or bridges or power-lines. At every time step, the robot visits a sensing location and receives a sensor measurement (e.g. depth image) that has some amount of information utility (e.g. surface coverage of objects with point cloud). If the robot employs a non-adaptive lawnmower-coverage pattern, it can potentially waste time and fuel visiting ineffective locations. On the other hand, if it uses the history of measurements to infer the geometry of unexplored space, it can plan efficient information-rich trajectories. This incentivizes training policies to optimally gather information on the distribution of worlds the robot expects to encounter.

Even though its is natural to think of this problem setting as a POMDP, we frame this problem as a novel data-driven imitation learning problem \cite{ross2010efficient}. We propose a framework that trains a policy on a dataset of worlds by imitating a \emph{clairvoyant oracle}. During the training process, the oracle has full knowledge about the world map (hence clairvoyant) and visits sensing locations that would maximize information. The policy is then trained to imitate these movements as best as it can using partial knowledge from the current history of movements and measurements. As a result of our novel formulation, we are able to sidestep a number of challenging issues in POMDPs like explicitly computing posterior distribution over worlds and planning in belief space. 

Our contributions are as follows:
\begin{enumerate}
\item We map the information gathering problem to a POMDP and present an approach to solve it using imitation learning of a clairvoyant oracle. 
\item We present a framework to train such a policy on the non i.i.d distribution of states induced by the policy itself. 
\item We analyze the learnt policy by showing that offline imitation of a clairvoyant oracle is equivalent to online oracle execution in conjunction with posterior sampling. 
\item We present results on a variety of datasets to demonstrate the efficacy of the approach.
\end{enumerate}

The remainder of this paper is organized as follows. Section \ref{sec:prob} presents the formal problem and Section \ref{sec:pomdp_imitate} maps it to the imitation learning framework. The algorithms and analysis is presented in Section \ref{sec:apprach}. Section \ref{sec:res} presents experimental results with conclusions in Section \ref{sec:conc}.


\section{Background}
\label{sec:prob}
\subsection{Notation}

Let $\vertexSet$ be a set of nodes corresponding to all sensing locations.
The robot starts at node $\vertexStart$.
Let $\Path = \seq{\vertex}{p}$ be a sequence of nodes (a path) such that $\vertex_1 = \vertexStart$. Let $\PathSet$ be the set of all such paths.
Let $\world \in \worldSet$ be the world map. 
Let $\meas \in \measSet$ be a measurement received by the robot. Let $\measFnDef{}: \vertexSet \times \worldSet \to \measSet$ be a measurement function. When the robot is at node $\vertex$ in a world map $\world$, the measurement $\meas$ received by the robot is $\meas = \measFn{\vertex}{\world}$. 
Let $\utilityFnDef{}: 2^\vertexSet \times \worldSet \to \real_{\geq 0}$ be a utility function. For a path $\Path$ and a world map $\world$, $\utilityFn{\Path}{\world}$ assigns a utility to executing the path on the world. 
Given a node $\vertex \in \vertexSet$, a set of nodes $\vertexGroup \subseteq \vertexSet$ and world $\world$, the discrete derivative of the utility function $\utilityFnDef$ is $\marginalGain{\vertex \mid \vertexGroup}{\world} = \utilityFn{\vertexGroup \cup \{ \vertex \}}{\world} - \utilityFn{\vertexGroup}{\world}$
Let $\costFnDef{}: \PathSet \times \worldSet \to \real_{\geq 0}$ be a travel cost function. For a path $\Path$ and a world map $\world$, $\costFn{\Path}{\world}$ assigns a travel cost for executing the path on the world. Fig.~\ref{fig:problem} shows an illustration.

\subsection{Problems with Known World Maps}
\label{sec:prob:known}

\begin{figure}[t!]
    \centering
    \includegraphics[width=\columnwidth]{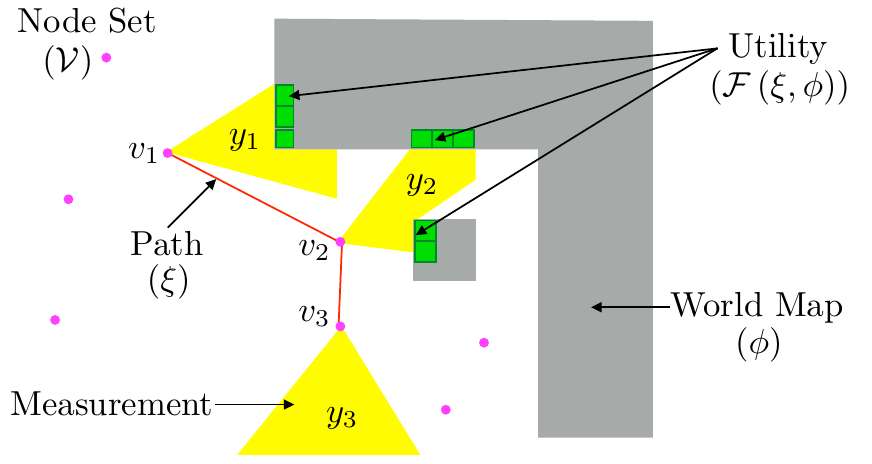}
    \caption{%
    The adaptive information gathering problem. Given a world map $\world$, the robot plans a path $\Path$ which visits a node $\vertex_i \in \vertexSet$ and receives measurement $y_i$, such that information gathered (utility)  $\utilityFn{\Path}{\world}$ is maximized.
    \label{fig:problem}
}
\end{figure}

We define four variants of the information gathering problem. For the first two variants, the world map $\world$ is known and can be evaluated while computing a path $\Path$.

\begin{problem}[\knownunc: Known World Map; Unconstrained Travel Cost] \label{prob:known:unc}
Given a world map $\world$ and a time horizon $T$, find a path $\Path$ that maximizes utility
\begin{equation}
\begin{aligned}
\argmaxprob{\Path \in \PathSet} \quad & \utilityFn{\Path}{\world} \\
\suchthat                             & \card{\Path} \leq T+1
\end{aligned}
\end{equation}
\end{problem}

\begin{problem}[\knowncon: Known World Map; Constrained Travel Cost] \label{prob:known:cons}
Problem~\ref{prob:known:unc} with a travel cost budget $\costBudget$ 
\begin{equation}
\begin{aligned}
\argmaxprob{\Path \in \PathSet} \quad & \utilityFn{\Path}{\world} \\
\suchthat                             & \costFn{\Path}{\world} \leq \costBudget \\
                                      &  \card{\Path} \leq T+1
\end{aligned}
\end{equation}
\end{problem}

Problem~\ref{prob:known:unc} is a set function maximization problem which in general can be NP-Hard (\citet{krause2012submodular}). However, the utility function $\utilityFnDef$ is a \emph{monotone submodular} function. For such functions, it has been shown that greedy strategies achieve near-optimality (\citet{krause2008efficient,Krause:2007:NOS:1619797.1619913}).

Problem~\ref{prob:known:cons} introduces a routing constraint (due to $\costFnDef$) for which greedy approaches can perform arbitrarily poorly. \citet{chekuri2005recursive,singh2007efficient} propose a quasi-polynomial time recursive greedy approach to solving this problem. \citet{iyer2013submodular} solve a related problem (submodular knapsack constraints) using an iterative greedy approach which is generalized by \citet{zhang2016submodular}. \citet{yu2014correlated} propose a mixed integer approach to solve a related correlated orienteering problem. \citet{hollinger2013sampling} propose a sampling based approach.

\subsection{Problems with Hidden World Maps}
\label{sec:prob:hidden}

We now consider the setting where the world map $\world$ is hidden. Given a prior distribution $P(\world)$, it can be inferred only via the measurements $\meas_i$ received as the robot visits nodes $\vertex_i$. Hence, instead of solving for a fixed path, we compute a policy that maps history of measurements received and nodes visited to decide which node to visit. 

\begin{problem}[\hiddenunc: Hidden World Map; Unconstrained Travel Cost] \label{prob:hidden:unc}
Given a distribution of world maps, $P(\world)$, a time horizon $T$, find a policy that at time $t$, maps the history of nodes visited $\{ \vertex_i \}_{i=1}^{t-1}$ and measurements received $\{ \meas_i \}_{i=1}^{t-1}$ to compute node $\vertex_t$ to visit at time $t$, such that the expected utility is maximized. 
\end{problem}

\begin{problem}[\hiddencon: Hidden World Map; Constrained Travel Cost] \label{prob:hidden:cons}
Problem~\ref{prob:hidden:unc} with a travel cost budget $\costBudget$
\end{problem}

Due to the hidden world map $\world$, it is not straight forward to apply the approaches discussed in Section~\ref{sec:prob:known} - methods have to reason about how $P(\world \; | \; \{ \vertex_i \}_{i=1}^{t-1} , \{ \meas_i \}_{i=1}^{t-1})$ will evolve. However, the utility function $\utilityFnDef$  has an additional property of \emph{adaptive submodularity} \cite{golovin2011adaptive}. Hence, applying greedy strategies to Problem~\ref{prob:hidden:unc} has near-optimality guarantees (\citet{golovin2010near}, Javdani \etal \cite{Javdani_2013_7419,Javdani_2014_7555}, Chen \etal \cite{AAAI159841,DBLP:journals/corr/ChenHK16a} ).

Problem~\ref{prob:hidden:cons} does not enjoy the adaptive submodularity property. Hollinger \etal \cite{hollinger2012active,hollinger2011active} propose a heuristic based approach to select a subset of informative nodes and perform minimum cost tours. Singh \etal \cite{Singh:2009:NAI:1661445.1661741} replan every step using a non-adaptive information path planning algorithm. Inspired by adaptive TSP approaches by Gupta \etal \cite{gupta2010approximation}, Lim \etal  \cite{lim2016adaptive,NIPS2015_6005} propose recursive coverage algorithms to learn policy trees. However such methods cannot scale well to large state and observation spaces. \citet{heng2015efficient} make a modular approximation of the objective function. \citet{isler2016information} survey a broad number of myopic information gain based heuristics that work well in practice but have no formal guarantees.


\section{POMDPs and Imitation Learning}
\label{sec:pomdp_imitate}

\subsection{Mapping Problems to a POMDP}

We now map Problems \hiddenunc and \hiddencon to a Partially Observable Markov Decision Process (POMDP). The POMDP is a tuple $(\stateSet, \worldSet, \actionSet, \transFnDef, \rewardFnDef, \obsSet, \obsFnDef, T)$ defined upto a fixed finite horizon $T$. It is defined over an augmented state space comprising of the ego-motion state space $\stateSet$ (which we will refer to as simply  the state space) and the space of world maps $\worldSet$. The first component, $\stateSet$, is fully observable while the second component, $\worldSet$, is partially observable through observations received.

Let the state, $\state_t \in \stateSet$, be the set of nodes visited, $\state_t = \seq{\vertex}{t}$. 
Let the action, $\action_t \in \actionSet$ be the node visited $\action_t = \vertex_{t+1}$. 
Given a world map $\world$, at state $\state$, the utility of $\action$ is $\utilityFn{\state \cup \action}{\world}$.
For Problem \hiddencon, let $\actionSetFeas{\state}{\world} \subset \actionSet$ be the set of feasible actions defined as
\begin{equation}
   \actionSetFeas{\state}{\world} = \setst{\action}{\action \in \actionSet, \costFn{\state \cup \action}{\world} \leq \costBudget}
\end{equation} 
The state transition function, $\transFn{\state}{\action}{\state'} = P(\state' | \state, \action)$, is the deterministic function $\state' = \state \cup \action$. 
The one-step-reward function, $\rewardFn{\state}{\world}{\action} \in [0,1]$, is defined as the normalized marginal gain of the utility function, $\rewardFn{\state}{\world}{\action} = \frac{\marginalGain{\action \mid \state}{\world}}{\utilityFn{\actionSet}{\world}}$.
Let the observation, $\obs_t \in \obsSet$ be the measurement $\obs_t = \meas_t$. The observation model, $\obsFn{\state}{\action}{\world}{\obs} = P(\obs | \state, \action, \world)$ is the deterministic function $\obs = \measFn{\state \cup \action}{\world}$.

We define the belief, $\belief_t$, to be the history of state, action, observation tuple received so far, i.e. $\{(\state_i, \action_i, \obs_i)\}_{i=1}^{t}$. The belief transition function $P(\belief' | \belief, \action)$ can be computed from $\transFnDef$ and $\obsFnDef$.
Let $\policyBel(\state, \belief) \in \policySetBel$ be a policy that maps state $\state$ and belief $\belief$ to a feasible action $\action \in \actionSetFeas{\state}{\world}$. 
The value of executing a policy $\policyBel$ for $t$ time steps starting at state $\state$ and belief $\belief$ is the expected cumulative reward obtained by $\policyBel$:
\begin{equation}
\valueFnBel{\policyBel}{t}(\state, \belief) = \sum\limits_{i=1}^{t} \expect{
\substack{\belief_i \sim P(\belief' \mid \belief, \policyBel, i), \\
\world \sim P(\world \mid \belief_i) \\ \state_i \sim P(\state' \mid \state, \policyBel, i)}
 }{\rewardFn{\state_i}{\world}{\policyBel(\state_i,\belief_i)}}
\end{equation}
where $P(\belief' | \belief, \policyBel, i)$ is the distribution of beliefs at time $i$ starting from $\belief$ and following policy $\policyBel$. Similarly $P(\state' | \state, \policyBel, i)$ is the distribution of states.
$P(\world | \belief_i)$ is the posterior distribution on worlds given the belief $\belief_i$. 

The state-action value function $\QFnBel{\policyBel}{t}$ is defined as the expected sum of one-step-reward and value-to-go
\begin{equation}
\begin{aligned}
\label{eq:pomdp:q}
\QFnBel{\policyBel}{t}(\state, \belief, \action) =& \expect{\world \sim P(\world \mid \belief)}{\rewardFn{\state}{\world}{\action}} + \\
                        &\expect{\belief' \sim P(\belief' \mid \belief, \action), \state' \sim P(\state' \mid \state, \action)}{\valueFnBel{\policyBel}{t-1}(\state', \belief')}
\end{aligned}
\end{equation}

The optimal POMDP policy is obtained by minimization of the expected state-action value function
\begin{equation}
\label{eq:pomdp:opt_policy}
\policy^* = \argmax\limits_{\policyBel \in \policySetBel} 
\expect{
\substack{t\sim U(1:T), \\
 \state \sim P(\state \mid \policyBel, t), \\
 \belief \sim P(\belief \mid \policyBel, t)}
 }{\QFnBel{\policyBel}{T-t+1}(\state, \belief, \policyBel(\state,\belief))}
\end{equation}
where $U(1:T)$ is a uniform distribution over the discrete interval $\{1, \dots, T\}$, $P(\state \mid \policyBel, t)$ and $P(\belief \mid \policyBel, t)$ are the  posterior distribution over states and beliefs following policy $\policyBel$ for $t$ steps.
The value of a policy $\policyBel \in \policySetBel$ for $T$ steps on a distribution of worlds $P(\world)$, starting states $P(\state)$ and starting belief $P(\belief)$ is the expected value function for the full horizon
\begin{equation}
\valuePol{\policyBel} = \expect{\state \sim P(\state), \belief \sim P(\belief)}{\valueFn{\policyBel}{T}(\state, \belief)}
\end{equation}

Just as Problems \hiddenunc and \hiddencon map to a POMDP, Problems \knownunc and \knowncon map to the corresponding MDP. While we omit the details for brevity, the MDP defines a corresponding policy $\policy(\state, \world) \in \policySet$, value function $\valueFn{\policy}{t}(\state, \world)$, state-action value function $\QFn{\policy}{t}(\state, \world, \action)$ and optimal policy $\policyMDP$.

Online POMDP planning also has a large body of work (see \citet{ross2008online}). Although there exists fast solvers such as POMCP (Silver and Veness \cite{silver2010monte}) and DESPOT (Somani \etal \cite{somani2013despot}), the space of world maps is too large for online planning. An alternative class of approaches is model-free policy improvement (\cite{peters2006policy}). While these methods make very little assumptions about the problem, they are local and require careful initialization.

\subsection{Imitation Learning}
\label{sec:pomdp_imitate:imitation_learning}
An alternative to policy improvement approaches is to train policies to imitate reference policies (or oracle policies). This is a suitable choice for scenarios where the problem objective is to imitate a user-defined policy. This is also a useful approach in scenarios where there exist good oracle policies for the original problem, however these policies cannot be executed online (e.g due to computational complexity) hence requiring imitation via an offline training phase.

We now formally define imitation learning as applied to our setting. Given a policy $\policyBel$, we define the distribution of states $P(\state | \policyBel)$ and beliefs $P(\belief | \policyBel)$ induced by it (termed as \emph{roll-in}). Let $\lossFnPolicy{\state}{\belief}{\policyBel}$ be a loss function that captures how well policy $\policyBel$ imitates an oracle. Our goal is to find a policy $\policyLEARN$ which minimizes the expected loss as follows.

\begin{equation}
\label{eq:imitation_learning}
\policyLEARN = \argmin\limits_{\policyBel \in \policySetBel} \expect{ 
\state \sim P(\state \mid \policyBel), 
\belief \sim P(\belief \mid \policyBel)}
{\lossFnPolicy{\state}{\belief}{\policyBel}}
\end{equation} 

This is a non-i.i.d supervised learning problem. Ross and Bagnell \cite{ross2010efficient} propose \FT to train a non-stationary policy (one policy $\policyLEARN_t$ for each timestep), where each policy $\policyLEARN_t$ can be trained on distributions induced by previous policies ($\policyLEARN_1, \dots, \policyLEARN_{t-1}$). While this has guarantees, it is impractical given a different policy is needed for each timestep. For training a single policy, Ross and Bagnell \cite{ross2010efficient} show how such problems can be reduced to no-regret online learning using dataset aggregation (\Dagger). The loss function they consider $\lossFnPolicyDef$ is a mis-classification loss with respect to what the expert demonstrated. Ross and Bagnell \cite{ross2014reinforcement} extend the approach to the reinforcement learning setting where $\lossFnPolicyDef$ is the reward-to-go of an oracle reference policy by aggregating \emph{values} to imitate (\aggrevate).

\subsection{Solving POMDP via Imitation of a Clairvoyant Oracle}

To examine the suitability of imitation learning in the POMDP framework, we compare the training rules (\ref{eq:pomdp:opt_policy}) and (\ref{eq:imitation_learning}). We see that a good candidate loss function $\lossFnPolicy{\state}{\belief}{\policyBel}$ should incentivize maximization of $\QFnBel{\policyBel}{T-t+1}(\state, \belief, \policyBel(\state,\belief))$. A suitable approximation of the optimal value function $\QFnBel{\policy^*}{T-t+1}$ that can be computed at train time would suffice. In this work we define cumulative reward gathered by a \emph{clairvoyant oracle} as the value function to imitate\footnote{Imitation of a clairvoyant oracle in information gathering has been explored by \citet{choudhury2016learning}. We subsume the presented algorithm, \textsc{ExpLOre}, in our framework (as Algorithm \algQvalAgg) and instead focus on the theoretical insight on what it means to imitate a clairvoyant oracle. \citet{kahn2016plato} has explored a similar idea in the context of reactive obstacle avoidance where a clairvoyant MPC is imitated. This too can be subsumed in our framework (as Algorithm \algRewAgg). Additionally, we provide analysis highlighting when such an approach is effective (in Section~\ref{sec:pomdp_imitate:hallucinating})}.

\begin{definition}[Clairvoyant Oracle]
Given a distribution of world map $P(\world)$, a clairvoyant oracle $\policyOR(\state, \world)$ is a policy that maps state $\state$ and world map $\world$ to a feasible action $\action \in \actionSetFeas{\state}{\world}$ such that it approximates the optimal MDP policy, $\policyOR \approx \policyMDP = \argmaxprob{\policy \in \policySet}\valuePol{\policy}$.
\end{definition}

The term \emph{clairvoyant} is used because the oracle has full access to the world map $\world$ at train time. The oracle can be used to compute state-action value as follows

\begin{equation}
\label{eq:qvaloracle}
\QFn{\policyOR}{t}(\state, \world, \action) = \rewardFn{\state}{\world}{\action} + \expect{\state' \sim P(\state' \mid \state, \action)}{\valueFn{\policyOR}{t-1}(\state', \world)} 
\end{equation}

Our approach is to imitate the oracle during training. This implies that we train a policy $\policyLEARN$ by solving the following optimization problem

\begin{equation}
\label{eq:imitateClairvoyantOracle}
\policyLEARN = \argmax\limits_{\policyBel \in \policySetBel} \expect{
\substack{t\sim U(1:T), \\
\state \sim P(\state \mid \policyBel, t), \\
\world \sim P(\world), \\
\belief \sim P(\belief \mid \world, \policyBel, t)}}
{\QFn{\policyOR}{T-t+1}(\state, \world, \policyBel(\state,\belief))}
\end{equation}

While we will define training procedures to concretely realize (\ref{eq:imitateClairvoyantOracle}) later in Section~\ref{sec:apprach}, we offer some intuition behind this approach. Since the oracle $\policyOR$ knows the world map $\world$, it has appropriate information to assign a value to an action $a$. The policy $\policyLEARN$ attempts to imitate this action from the partial information content present in its belief $\belief$. Due to this realization error, the policy $\policyLEARN$ visits a different state, updates its belief, gains more information, and queries the oracle for the best action. Hence while the learnt policy can make mistakes in the beginning of an episode, with time it gets better at imitating the oracle. 
\subsection{Analysis using a Hallucinating Oracle}
\label{sec:pomdp_imitate:hallucinating}
The learnt policy imitates a clairvoyant oracle that has access to more information (world map $\world$ compared to belief $\belief$). Hence, the realizability error of the policy is due to two terms - firstly the information mismatch and secondly the expressiveness of feature space. This realizability error can be hard to bound making it difficult to bound the performance of the learnt policy. This motivates us to introduce a hypothetical construct, a \emph{hallucinating oracle}, to alleviate the information mismatch.

\begin{definition}[Hallucinating Oracle]
Given a prior distribution of world map $P(\world)$ and roll-outs by a policy $\policyBel$, a hallucinating oracle $\policyORBel$ computes the instantaneous posterior distribution over world maps and takes the action with the highest expected state-action value as computed by the clairvoyant oracle. 
\begin{equation}
  \policyORBel = \argmax\limits_{\action \in \actionSet} \expect{\worldTwo \sim P(\worldTwo \mid \belief, \policyBel, t)}{\QFn{\policyOR}{T-t+1}(\state, \worldTwo, \action)}
\end{equation}
\end{definition}

We will now show that by imitating a clairvoyant oracle we effectively imitate the corresponding hallucinating oracle 

\begin{lemma}
The \textbf{offline} imitation of \textbf{clairvoyant} oracle (\ref{eq:imitateClairvoyantOracle}) is equivalent to sampling \textbf{online} a world from the posterior distribution and executing a \textbf{hallucinating} oracle as shown\footnote{Refer to supplementary for all proofs}

\begin{equation*}
\policyLEARN = \argmax\limits_{\policyBel \in \policySetBel} \expect{
\substack{t\sim U(1:T), \\
\state \sim P(\state \mid \policyBel, t), \\
\world \sim P(\world), \\
\belief \sim P(\belief \mid \world, \policyBel, t)}}
{\QFn{\policyORBel}{T-t+1}(\state, \world, \policyBel(\state,\belief))}
\end{equation*}

\end{lemma}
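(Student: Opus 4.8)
The plan is to prove the equality of the two optimization objectives \emph{pointwise} in $\policyBel$: if the functional being maximized on the left and the functional being maximized on the right agree for every $\policyBel \in \policySetBel$, then their $\argmax$ sets are identical and the lemma follows with no optimization-theoretic content. So the whole proof reduces to a change-of-measure identity for the expectation appearing in (\ref{eq:imitateClairvoyantOracle}).

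First I would write that expectation out as an explicit integral and isolate the joint law of $(\world, \belief)$ that it uses: a world is sampled from the prior $P(\world)$, and then a belief is sampled from the distribution induced by rolling $\policyBel$ in for $t$ steps under that world, $P(\belief \mid \world, \policyBel, t)$. The first move is a Bayes'-rule swap of the order of sampling: this same joint factors as $P(\belief \mid \policyBel, t)\, P(\world \mid \belief, \policyBel, t)$, where $P(\belief \mid \policyBel, t) = \int P(\world)\, P(\belief \mid \world, \policyBel, t)\, d\world$ is the roll-out marginal over beliefs and $P(\world \mid \belief, \policyBel, t)$ is the instantaneous posterior over worlds. I would also use that the state $\state$ is a deterministic function of $\belief$ — it is merely the list of visited nodes stored in the history — so conditioning on $\belief$ pins down $\state$ and adds no extra coupling between $\world$ and $\state$.

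The crucial second move is the observation that the action $\policyBel(\state,\belief)$ inside $\QFn{\policyOR}{T-t+1}(\state, \world, \policyBel(\state,\belief))$ depends only on $(\state, \belief)$ and never on $\world$ — the learner is not clairvoyant. Hence, after the swap, the inner expectation over the posterior acts only on the $\world$-slot of the clairvoyant value function, and for each fixed $(\state, \belief, t)$ it equals $\expect{\world \sim P(\world \mid \belief, \policyBel, t)}{\QFn{\policyOR}{T-t+1}(\state, \world, \action)}$ evaluated at $\action = \policyBel(\state, \belief)$. By the very definition of the hallucinating oracle, this posterior-averaged clairvoyant value \emph{is} $\QFn{\policyORBel}{T-t+1}$ evaluated at that action. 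Re-absorbing the posterior expectation back into the sampling notation — i.e., undoing the Bayes swap, now carrying $\QFn{\policyORBel}{}$ in place of $\QFn{\policyOR}{}$ — reproduces exactly the right-hand objective in the lemma statement, and taking $\argmax_{\policyBel}$ of the two equal functionals closes the argument.

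The main point to be careful about, and essentially the only content beyond bookkeeping, is the consistency of the posterior in the two places it shows up: the posterior $P(\world \mid \belief, \policyBel, t)$ produced by the Bayes swap must be the \emph{same} posterior that enters the definition of $\policyORBel$ — the one formed from roll-outs of the \emph{same} candidate $\policyBel$ at the \emph{same} horizon index $t$. This holds because the belief we condition on was, by construction, generated by rolling $\policyBel$ in for $t$ steps starting from $P(\world)$, so the ``sample a world online from the posterior and run the hallucinating oracle'' reading is an exact restatement rather than an approximation. A secondary, routine point is measure-theoretic: interchanging the $\world$-integral with the policy-indexed roll-out integrals when $\worldSet$ is a continuum, which is justified by Fubini since the normalized reward $\rewardFn{\state}{\world}{\action} \in [0,1]$ makes $\QFn{\policyOR}{T-t+1}$ bounded.
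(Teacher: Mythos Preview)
Your proposal is correct and takes essentially the same approach as the paper: both arguments hinge on the Bayes-rule factorization $P(\world)\,P(\belief\mid\world,\policyBel,t) = P(\belief\mid\policyBel,t)\,P(\world\mid\belief,\policyBel,t)$ together with the identification of the resulting inner posterior expectation of $\QFn{\policyOR}{T-t+1}$ with the defining expression for $\QFn{\policyORBel}{T-t+1}$. The paper carries this out via an explicit triple sum over $(\world,\belief,\worldTwo)$ with two applications of Bayes' rule and subsequent marginalization, whereas you phrase the same manipulation as a swap--identify--reswap; your explicit remark that $\policyBel(\state,\belief)$ is constant in $\world$ once $\belief$ is fixed is a clean way to justify pushing the posterior expectation through to the $\world$-slot alone, a step the paper leaves implicit by proving the identity for a generic fixed action $a$.
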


Note that a hallucinating oracle uses the same information content as the learnt policy. Hence the realization error is purely due to the expressiveness of the feature space. However, we now have to analyze the performance of the hallucinating oracle which chooses the best action at a time step given its current belief. We will see in Section \ref{sec:approach:one_step_reward} that this policy is near-optimal for Problem \hiddenunc. For Problem \hiddencon, while this does not have any such guarantees, there is evidence to show this is an effective policy as alluded to in \citet{Koval-RSS-14}.

\section{Approach}
\label{sec:apprach}

\begin{figure*}[!htp]
    \centering
    \includegraphics[width=\textwidth]{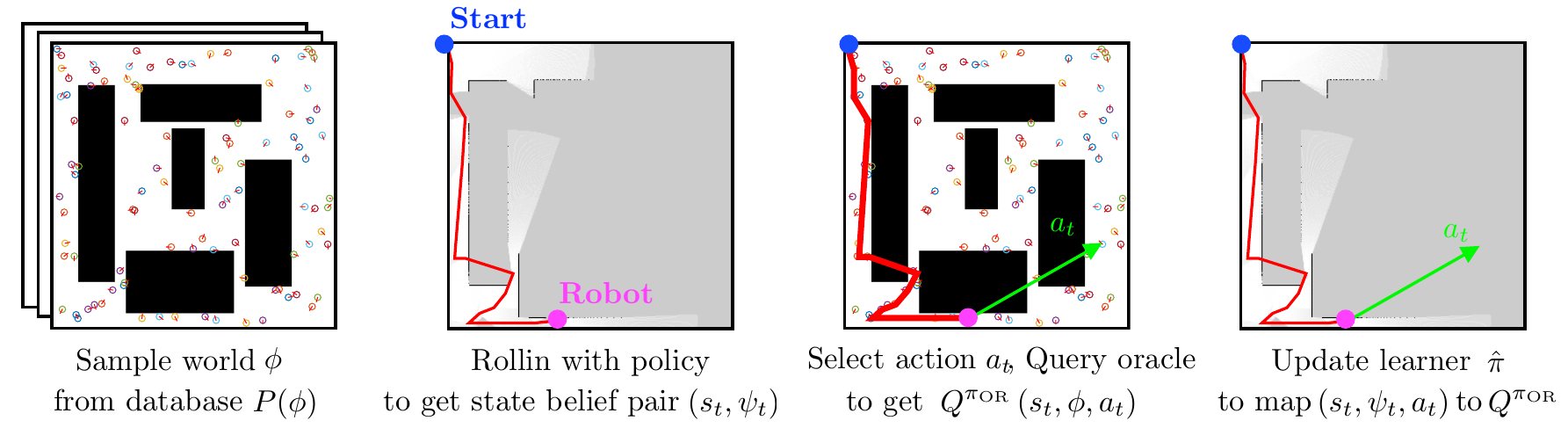}
    \caption{%
    An abstract overview of the imitation learning architecture where a learner $\policyLEARN$ is trained to imitate a clairvoyant oracle $\policyOR$. There are 4 key steps. Step 1: A world map $\world$ is sampled from database representing $P(\world)$. Step 2: A policy is used to roll-in on $\world$ to a timestep $t$ to get state $\state_t$ and belief $\belief_t$. Step 3: A random action $a_t$ is chosen and a clairvoyant oracle $\policyOR$ is given full access to world map $\world$ to compute the cumulative reward to go $\QFn{\policyOR}{T-t+1}\left(\state_t, \world, \action_t \right)$ Step 4: The learnt policy  $\policyLEARN$ is updated to map $\left(\state_t, \belief_t, \action_t \right)$ to $\QVal^{\policyOR}$.
        \fullFigGap}
    \label{fig:alg:overview}
\end{figure*}%

\subsection{Overview}

We introduced imitation learning and its applicability to POMDPs in Section~\ref{sec:pomdp_imitate}. We now present a set of algorithms to concretely realize the process. The overall idea is as follows - we are training a policy $\policyLEARN(\state, \belief)$ that maps features extracted from state $\state$ and belief $\belief$ to an action $\action$. The training objective is to imitate a clairvoyant oracle that has access to the corresponding world map $\world$. Fig.~\ref{fig:alg:overview} shows an abstract overview. In order to define concrete algorithms, there are two degrees of freedom that need to be specified

\begin{enumerate}
  \item \textit{Clairvoyant oracle to imitate}: The choice of the clairvoyant oracle defines $\QFn{\policyOR}{t}(\state, \world, \action)$ in (\ref{eq:qvaloracle}). Depending on whether we are solving Problem \hiddenunc or \hiddencon, we explore two different kinds of oracles
  \begin{enumerate}[label=(\alph*)]
    \item \textit{Clairvoyant one-step-reward}: For Problem \hiddenunc, we use the one-step-reward $\rewardFn{\state}{\world}{\action}$ in place of $\QFn{\policyOR}{t}(\state, \world, \action)$. This corresponds to greedily imitating the immediate reward. We will see in Section~\ref{sec:approach:one_step_reward} that this has powerful near-optimality guarantees.
    \item \textit{Clairvoyant reward-to-go}: For Problem \hiddencon, we define an oracle policy $\policyOR$ that approximately solves the underlying MDP and use its cumulative reward-to-go to compute $\QFn{\policyOR}{t}(\state, \world, \action)$ (refer (\ref{eq:qvaloracle})).
  \end{enumerate}
  \item \textit{Stationary / Non-stationary policy}: As mentioned in Section.~\ref{sec:pomdp_imitate:imitation_learning}, imitation learning deals with non i.i.d distributions induced by the policy themselves. Depending on the choice of policy type (non-stationary / stationary), a corresponding training algorithm exists that offers guarantees
  \begin{enumerate}[label=(\alph*)]
    \item \textit{Non-stationary policy}: For the non-stationary case, we have a policy for each timestep $\policyLEARN^1, \ldots, \policyLEARN^{T}$. While this can be trained using the \FT algorithm \cite{ross2010efficient}, there are several drawbacks. Firstly, it is impractical to have a different policy for each time-step as it scales with $T$. Secondly, the training has to proceed sequentially. Thirdly, each policy operates on data for only that time-step, thus preventing generalizations across timesteps. However, the procedure is presented for completeness.
    \item \textit{Stationary policy}: A single stationary policy $\policyLEARN$ can be trained using the \aggrevate algorithm \cite{ross2014reinforcement} that reduces the problem to no-regret online learning setting. The training procedure is an interactive process where data is aggregated to refine the policy. The advantages are that the policy uses data across all timesteps, only one policy needs to be tracked and the training process can be stopped arbitrarily.
  \end{enumerate}
\end{enumerate}

\subsection{Algorithm}

We now present concrete algorithms to realize the training procedure. Given the two axes of variation - problem and policy type - we have four possible algorithms 
\begin{enumerate}
    \item \algRewFT: Imitate one-step-reward using non-stationary policy by \FT (Alg.~\ref{alg:FT})
    \item \algQvalFT: Imitate reward-to-go using non-stationary policy by \FT (Alg.~\ref{alg:FT})
    \item \algRewAgg: Imitate one-step-reward using stationary policy by \Dagger (Alg.~\ref{alg:Agg})
    \item \algQvalAgg: Imitate reward-to-go using non-stationary policy by \aggrevate (Alg.~\ref{alg:Agg})
\end{enumerate}
Table.~\ref{tab:alg:mapping} shows the algorithm mapping.

 \begin{table}[!htbp]
    \centering
    \caption{Mapping from Problem and Policy type to Algorithm}
    \begin{tabulary}{0.8\textwidth}{L|CC}\toprule
       \diagbox{\bf Policy}{\bf Problem}       &   \hiddenunc         & \hiddencon        \\ \midrule
       Non-stationary policy             &    \algRewFT      & \algQvalFT           \\
       Stationary policy                     &    \algRewAgg     & \algQvalAgg          \\ \bottomrule
    \end{tabulary}
    \label{tab:alg:mapping}
\end{table}

\begin{algorithm}
\caption{Non-stationary policy (\algRewFT, \algQvalFT) \label{alg:FT}}
\begin{algorithmic}[1]
\For{$t=1$ \textbf{to} $T$} \label{alg:FT:init}
\State Initialize $\dataset \gets \emptyset$.
\For{$j=1$ \textbf{to} $\numDatapoints$}
\State Sample world map $\world$ from dataset $P(\world)$
\State Execute policy $\policyLEARN^1, \ldots, \policyLEARN^{t-1}$ to reach $\pair{\state_t}{\belief_t}$.\label{alg:FT:rollin} 
\State Execute any action $\action_t \in \actionSetFeas{\state_t}{\world}$.
\State Execute oracle $\policyOR$ from $t+1$ to $T$ on $\world$ \label{alg:FT:oracle}
\State Collect value to go $\QVal_i^{\policyOR} =  {\QFn{\policyOR}{T-t+1}(\state_t, \world, \action_t)}$
\State $\dataset \gets \dataset \cup \{\state_t, \belief_t, \action_t, t, \QVal_i^{\policyOR}\}$ 
\EndFor
\State Train cost-sensitive classifier $\policyLEARN^t$ on $\dataset$
\EndFor
\State \textbf{Return} Set of policies for each time step $\policyLEARN^1, \ldots, \policyLEARN^{T}$ .
\end{algorithmic}
\end{algorithm}

\begin{algorithm}
\caption{Stationary policy (\algRewAgg, \algQvalAgg) \label{alg:Agg}}
\begin{algorithmic}[1]
\State Initialize $\dataset \gets \emptyset$, $\policyLEARN_1$ to any policy in $\policySetBel$ \label{alg:qvalAgg:init}
\For{$i=1$ \textbf{to} $\numLearnIter$}
\State Initialize sub dataset $\dataset_i \gets \emptyset$\; \label{alg:qvalAgg:initSub}
\State Let roll-in policy be $\policyMix = \mixfrac{i} \policyOR + (1-\mixfrac{i}) \policyLEARN_i$ \label{alg:qvalAgg:mixPol}
\State Collect $m$ data points as follows:
\For{$j=1$ \textbf{to} $\numDatapoints$}
\State Sample world map $\world$ from dataset $P(\world)$ \label{alg:qvalAgg:sampleWorld}
\State Sample uniformly $t \in \{1,2,\dots,T\}$ \label{alg:qvalAgg:sampleTime}
\State Assign initial state $\state_1 = \vertexStart$ \label{alg:qvalAgg:initialState}
\State Execute $\policyMix$ up to time $t-1$ to reach $\pair{\state_t}{\belief_t}$ \label{alg:qvalAgg:rollin}
\State Execute any action $\action_t \in \actionSetFeas{\state_t}{\world}$ \label{alg:qvalAgg:takeAction}
\State Execute oracle $\policyOR$ from $t+1$ to $T$ on $\world$ \label{alg:qvalAgg:execOracle}
\State Collect value-to-go $\QVal_i^{\policyOR} =  {\QFn{\policyOR}{T-t+1}(\state_t, \world, \action_t)}$ \label{alg:qvalAgg:collectVal}
\State $\dataset_i \gets \dataset_i \cup \{\state_t, \belief_t, \action_t, t, \QVal_i^{\policyOR}\}$ \label{alg:qvalAgg:aggrSubData}
\EndFor
\State Aggregate datasets: $\dataset \gets \dataset \bigcup \dataset_i$ \label{alg:qvalAgg:aggrData}
\State Train cost-sensitive classifier $\policyLEARN_{i+1}$ on $\dataset$\\ \label{alg:qvalAgg:updateLearner}
~~~~ (\emph{Alternately: use any online learner $\policyLEARN_{i+1}$ on $\dataset_i$})
\EndFor
\State \textbf{Return} best $\policyLEARN_i$ on validation
\end{algorithmic}
\end{algorithm}

Alg.~\ref{alg:FT} describes the \FT procedure to train the non-stationary policy. Previous policies $\policyLEARN^1, \ldots, \policyLEARN^{t-1}$ are used to create a dataset (Lines~\ref{alg:FT:init}--\ref{alg:FT:rollin}). The oracle provides the value to imitate (Line~\ref{alg:FT:oracle}) - if the one-step-reward is used the algorithm is referred to \algRewFT else if an oracle is used \algQvalFT. 

Alg.~\ref{alg:Agg} describes the \aggrevate procedure to train the stationary policy. The algorithm iteratively trains a sequence of learnt policies $\seq{\policyLEARN}{\numLearnIter}$ by aggregating data for an online cost-sensitive classification problem. At any given iteration, data is collected by roll-in with a mixture of learnt and oracle policy (Lines~\ref{alg:qvalAgg:init}--\ref{alg:qvalAgg:rollin}). The oracle provides the value to imitate (Lines~\ref{alg:qvalAgg:execOracle}--\ref{alg:qvalAgg:collectVal}) - if the one-step-reward is used the algorithm is referred to \algRewAgg else if an oracle is used \algQvalAgg. Data is appended to the original dataset and used to train an updated learner $\policyLEARN_{i+1}$ (Lines~\ref{alg:qvalAgg:aggrData}--\ref{alg:qvalAgg:updateLearner}).

\subsection{Imitation of Clairvoyant One-Step-Reward}
\label{sec:approach:one_step_reward}

The strategy to imitate the clairvoyant one-step-reward is employed in Problem \hiddenunc. This is motivated by the observation that the utility function satisfies a property of adaptive submodularity (as mentioned in Section~\ref{sec:prob:hidden}). For such problems, greedily selecting actions with highest expected reward has near-optimality guarantees. This implies the following Lemma

\begin{lemma}
The performance of the hallucinating oracle $\policyORBel$ is near-optimal w.r.t the optimal policy $\policy^*$. 
\begin{equation*}
\valuePol{\policyORBel} \geq \left(1 - \frac{1}{e}\right) \valuePol{\policy^*}
\end{equation*}
\end{lemma}

This property can then be used to obtain a near-optimality bound for the learnt policy

\begin{theorem}
N iterations of \algRewAgg, collecting $m$ regression examples per iteration guarantees that with probability at least $1-\delta$
\begin{equation*}
\begin{aligned}
  \valuePol{\policyLEARN} \geq & \left(1 - \frac{1}{e}\right) \valuePol{\policy^*} \\
  & - 2\sqrt{\abs{\actionSet}}T\sqrt{\errclass + \errreg 
  + O \left(\sqrt{\log \left(\nicefrac{ \left( \nicefrac{1}{\delta} \right)}{Nm}\right)} \right)} \\
  & - O \left(\frac{T^2 \log T}{\alpha N}\right)
\end{aligned}
\end{equation*}
where $\errreg$ is the empirical average online learning regret on the training regression examples collected over the iterations and 
$\errclass$ is the empirical regression regret of the best regressor in the policy class.
\end{theorem}

\subsection{Imitation of Clairvoyant Reward-To-Go}

Problem \hiddencon does not posses the adaptive submodularity property. However there is empirical evidence to suggest that the hallucinating oracle performance $\valuePol{\policyORBel}$ is sufficiently high \cite{Koval-RSS-14}. Hence the learnt policy has a performance guarantee with respect to the hallucinating oracle

\begin{theorem}
N iterations of \algQvalAgg, collecting $m$ regression examples per iteration guarantees that with probability at least $1-\delta$
\begin{equation*}
\begin{aligned}
  \valuePol{\policyLEARN} \geq & \valuePol{\policyORBel} \\
  & - 2\sqrt{\abs{\actionSet}}T\sqrt{\errclass + \errreg 
  + O \left(\sqrt{\log \left(\nicefrac{ \left( \nicefrac{1}{\delta} \right)}{Nm}\right)} \right)} \\
  & - O \left(\frac{T^2 \log T}{\alpha N}\right)
\end{aligned}
\end{equation*}
where $\errreg$ is the empirical average online learning regret on the training regression examples collected over the iterations and 
$\errclass$ is the empirical regression regret of the best regressor in the policy class.
\end{theorem}


\section{Experimental Results}
\label{sec:res}

\begin{figure*}[!htbp]
    \centering
    \includegraphics[page=1,width=\textwidth]{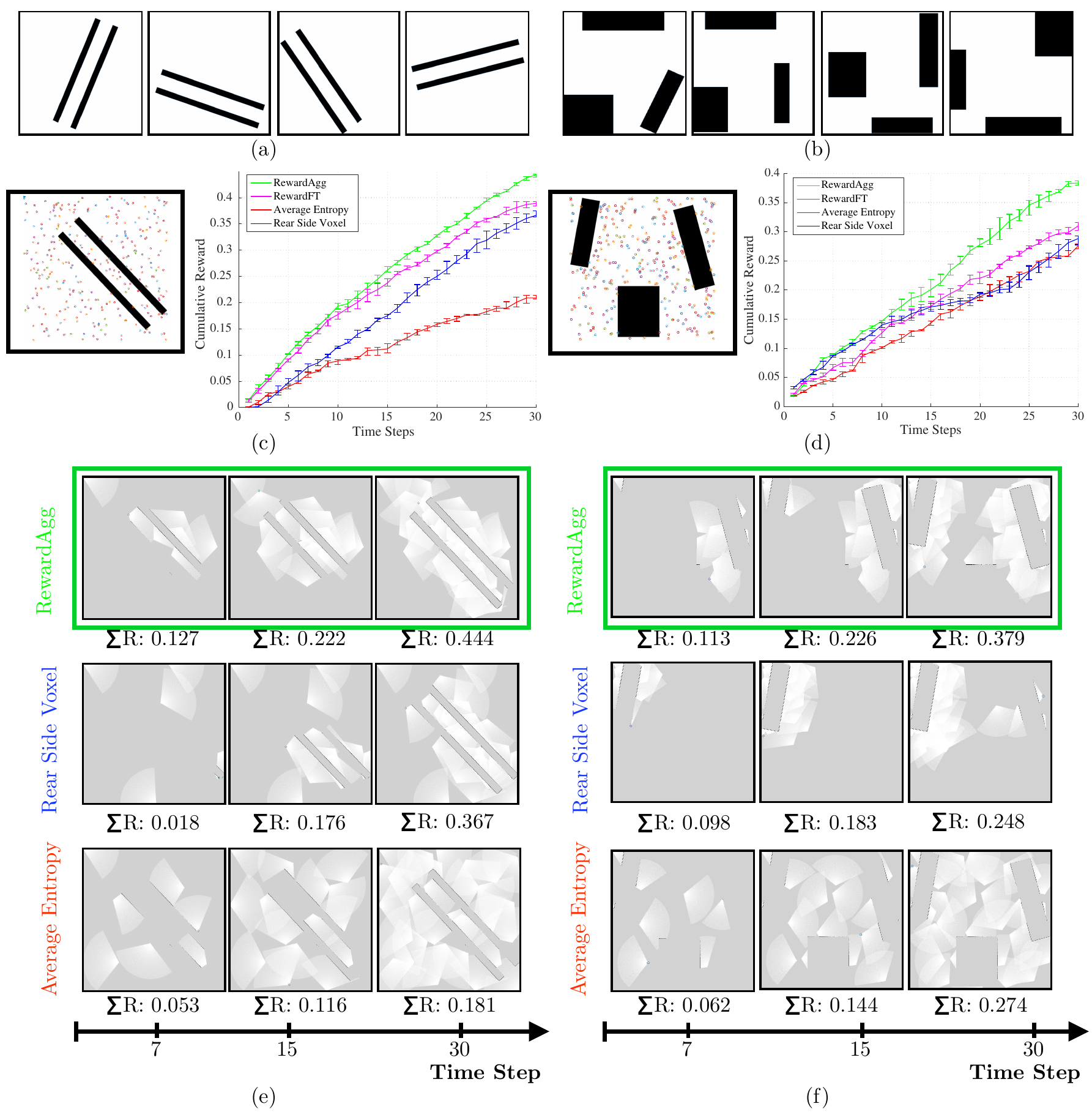}
\begin{minipage}{\textwidth}
    \caption{%
    Case study of Problem \hiddenunc using \algRewAgg, \algRewFT and baseline heuristics. Two different datasets of 2D exploration are considered - (a) dataset 1 (parallel lines) and (b) dataset 2 (distributed blocks). Problem details are: $T=30, |\actionSet|=300$, $100$ train and $100$ test maps. A sample test instance is shown along with a plot of cumulative reward with time steps for different policies is shown in (c) and (d). The error bars show $95\%$ confidence intervals. (e) and (f) show snapshots of the execution at time steps $7, 15$ and $30$. 
        \label{fig:results:matlab}
        \fullFigGap}
\end{minipage}

            \begin{minipage}{0.01\textwidth}
\phantomsubcaption{\label{fig:results:matlab:a}}
\end{minipage}
\begin{minipage}{0.01\textwidth}
\phantomsubcaption{\label{fig:results:matlab:b}}
\end{minipage}
    \begin{minipage}{0.01\textwidth}
\phantomsubcaption{\label{fig:results:matlab:c}}
\end{minipage}
\begin{minipage}{0.01\textwidth}
\phantomsubcaption{\label{fig:results:matlab:d}}
\end{minipage}
\begin{minipage}{0.01\textwidth}
\phantomsubcaption{\label{fig:results:matlab:e}}
\end{minipage}
\begin{minipage}{0.01\textwidth}
\phantomsubcaption{\label{fig:results:matlab:f}}
\end{minipage}
\end{figure*}%

\begin{figure*}[t]
    \centering
    \includegraphics[page=1,width=\textwidth]{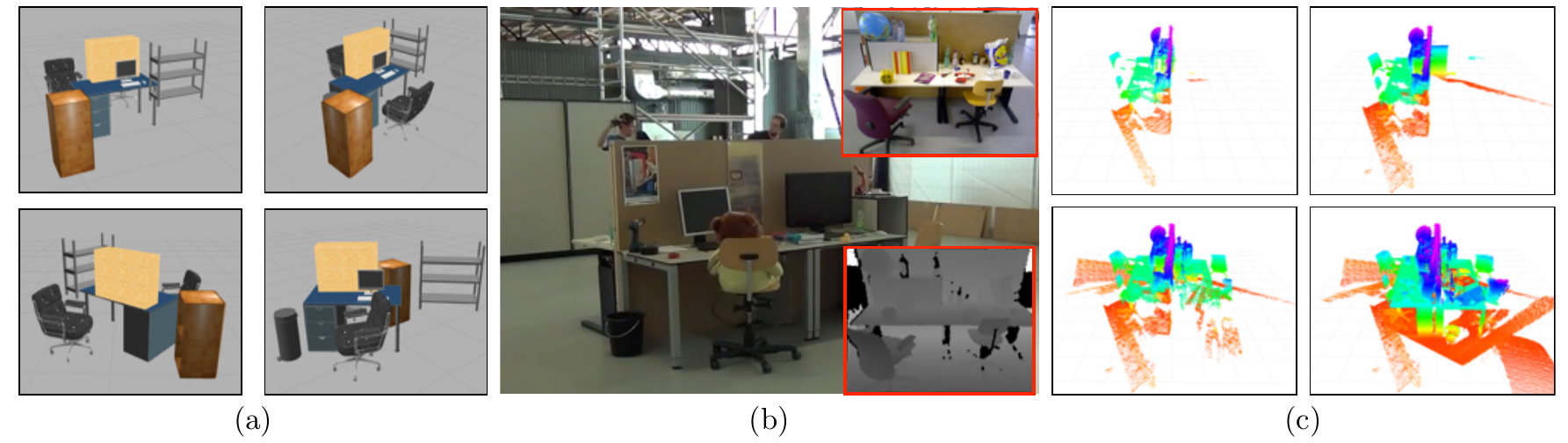}

\begin{minipage}{\textwidth}
    \caption{%
    Comparison of \algQvalAgg with baseline heuristics on a 3D exploration problem where training is done on simulated world maps and testing is done on a real dataset of an office workspace. The problem details are: $T=10$, $\costBudget=12$, $|\actionSet|=50$. 
    (a) Samples from $100$ simulated worlds resembling an office workspace created in Gazebo.
    (b) Real dataset collected by \cite{sturm12iros} using a RGBD camera. 
    (c) Snapshots of execution of \algQvalAgg heuristic at time steps $1,3, 5, 9$. 
    \label{fig:results:cpp}
        \fullFigGap}
\end{minipage}

                \begin{minipage}{0.01\textwidth}
\phantomsubcaption{\label{fig:results:cpp:a}}
\end{minipage}
\begin{minipage}{0.01\textwidth}
\phantomsubcaption{\label{fig:results:cpp:b}}
\end{minipage}
    \begin{minipage}{0.01\textwidth}
\phantomsubcaption{\label{fig:results:cpp:c}}
\end{minipage}
\end{figure*}%

\begin{figure*}[t]
    \centering
    \includegraphics[page=1,width=\textwidth]{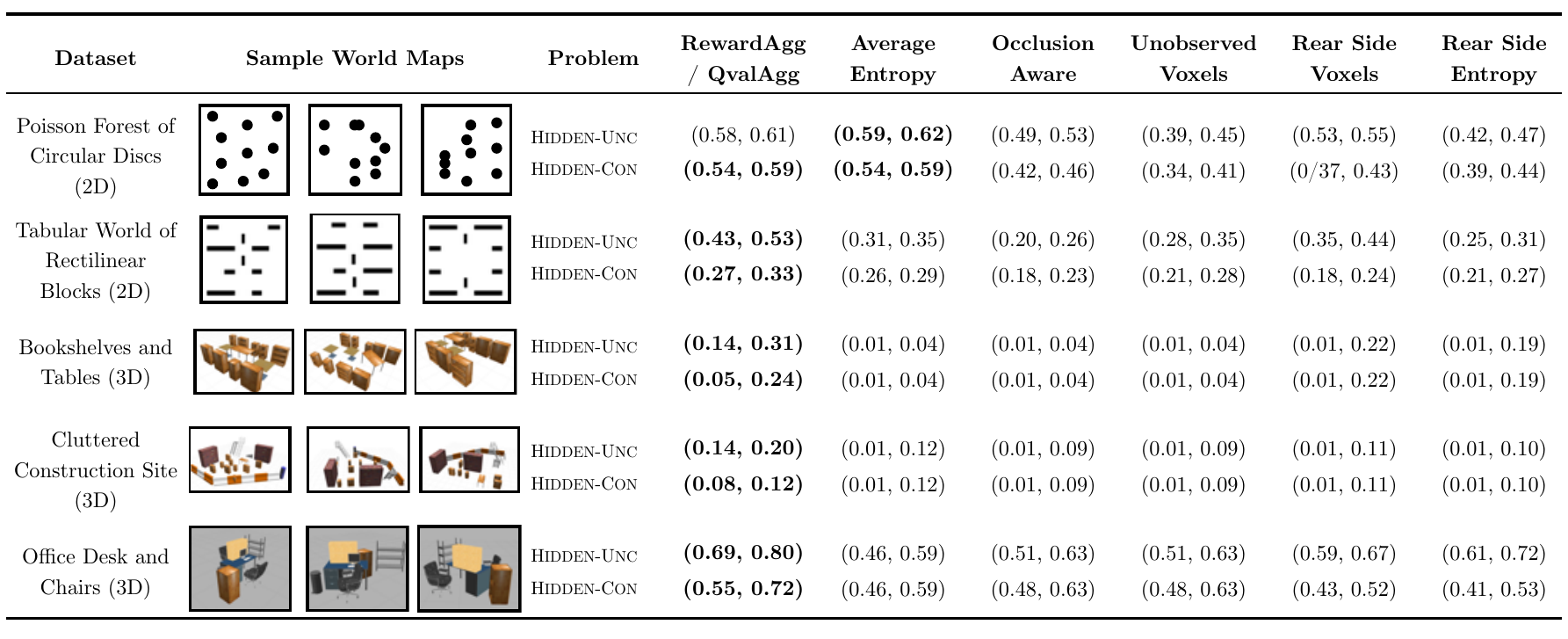}
    \caption{%
    Results for Problems \hiddenunc and \hiddencon on a spectrum of 2D and 3D exploration problems. The train size is $100$ and test size is $10$. Numbers are the confidence bounds (for 95\% CI) of cumulative reward at the final time step. Algorithm with the highest median performance is emphasized in bold. 
          \fullFigGap}
    \label{fig:results:extra}
\end{figure*}%

\subsection{Implementation Details}

\subsubsection{Problem Details} 
The utility function $\utilityFnDef$ is selected to be a fractional coverage function (similar to \cite{isler2016information}) which is defined as follows.
The world map $\world$ is represented as a voxel grid representing the surface of a 3D model. The sensor measurement $\measFn{\vertex}{\world}$ at node $\vertex$ is obtained by ray-casting on this 3D model. A voxel of the model is said to be `covered' by a measurement received at a node if a point lies in that voxel. The coverage of a path $\Path$ is the fraction of covered voxels by the union of measurements received when visiting each node of the path. The travel cost function $\costFnDef$ is chosen to be the euclidean distance.
The values of total time step $T$ and travel budget $\costBudget$ vary with problem instances and are specified along with the results.

\subsubsection{Learning Details}

The tuple $\left(\state, \action, \belief \right)$ is mapped to a vector of features $\feature =  \bbm \featureIG^T & \featureMot^T \ebm^T$. The feature vector $\featureIG$ is a vector of information gain metrics as described in \cite{isler2016information}. $\featureMot$ encodes the relative rotation and translation required to visit a node. Random forest regression is used as a function approximator. The oracle used is the generalized cost benefit algorithm (GCB) \cite{zhang2016submodular}. 

\subsubsection{Baseline}

The baseline policies are a class of information gain heuristics discussed in \cite{isler2016information} augmented with a motion penalization term when applied to Problem \hiddencon.  The heuristics are remarkably effective, however, their performance depends on the distribution of objections in a world map. 

\subsection{Adaptation to Different Distributions}
We created a set of 2D exploration problems to gain a better understanding of the learnt policies and baseline heuristics. The problem was \hiddenunc, the dataset comprises of 2D binary world maps, uniformly distributed nodes and a simulated laser. The problem details are $T=30$ and $|\actionSet|=300$. The train size is $100$, test size is $100$. \algRewAgg is executed for $10$ iterations.

The overall conclusion is that on changing the datasets the performance of the heuristics vary widely while the learnt policies outperform both heuristics. Interstingly, \algRewAgg outperforms \algRewFT - this is probably due to the generalization across time-steps. We now analyze each dataset.

\subsubsection{Dataset 1: Parallel Lines}

Fig.~\ref{fig:results:matlab:a} shows a dataset created by applying random affine transformations to a pair of parallel lines. 
This dataset is representative of information being concentrated in a particular fashion.
Fig.~\ref{fig:results:matlab:c} shows a comparison of \algRewAgg, \algRewFT with baseline heuristics. While \RearSideVoxel outperforms \AverageEntropy, \algRewAgg outperforms both.
Fig.~\ref{fig:results:matlab:e} shows progress of each. \AverageEntropy explores the whole world without focusing, \RearSideVoxel exploits early while \algQvalAgg trades off exploration and exploitation.

\subsubsection{Dataset 2: Distributed Blocks}

Fig.~\ref{fig:results:matlab:b} shows a dataset created by randomly distributing rectangular blocks around the periphery of the map.
This dataset is representative of information being distributed around.
Fig.~\ref{fig:results:matlab:c} shows that \RearSideVoxel saturates early, \AverageEntropy eventually overtaking it while \algRewAgg outperforms all.
Fig.~\ref{fig:results:matlab:e} shows that \RearSideVoxel gets stuck exploiting an island of information. \AverageEntropy takes broader sweeps of the area thus gaining more information about the world. \algQvalAgg shows a non-trivial behavior exploiting one island before moving to another.

\subsection{Train on Synthetic, Test on Real}
To show the practical usage of our pipeline, we show a scenario where a policy is trained on synthetic data and tested on a real dataset. 
Fig.~\ref{fig:results:cpp:a} shows some sample worlds created in Gazebo to represent an office desk environment on which \algQvalAgg is trained. 
Fig.~\ref{fig:results:cpp:b} shows a dataset of an office desk collected by TUM Computer Vision Group \cite{sturm12iros}. The dataset is parsed to create a pair of pose and registered point cloud which can then be used to evaluate different algorithms.
Fig.~\ref{fig:results:cpp:c} shows \algQvalAgg learns a desk exploring policy by circumnavigating around the desk. This shows the powerful generalization capabilities of the approach.

\subsection{Spectrum of 2D / 3D exploration problems}

We evaluate the framework on a spectrum of 2D / 3D exploration problems on synthetic worlds as shown in Fig.~\ref{fig:results:extra}. For Problem \hiddenunc, \algRewAgg is employed along with baseline heuristics. For Problem \hiddencon, \algQvalAgg is employed with baseline heuristic augmented with motion penalization. The train size is $100$ and test size is $10$. We see that the learnt policies outperform all heuristics on most datasets by exploiting the distribution of objects in the world particularly in Problem \hiddenunc. This is indicative of the power of the hallucinating oracle. However, the Poisson forest datasets stand out - where given the ergodic distribution, the \AverageEntropy heuristic performs best. 

\section{Conclusion}
\label{sec:conc}
We present a novel framework for learning information gathering policies via imitation learning of clairvoyant oracles. We presented analysis that establishes an equivalence to online imitation of hallucinating oracles thereby explaining the success of such policies. The framework is validated on a spectrum of 2D and 3D exploration problems. 

There are several key directions for future research. Firstly, analysis from \citet{chen2016pomdp}, where the authors show that under mild assumptions POMDPs can be reduced to sequence of MDPs, is promising for obtaining better regret guarantees. \cite{NIPS2015_6005} also offer alternatives to adaptive submodularity that could improve guarantees for \hiddencon. Secondly, instead of learning a policy, learning a surrogate utility function that is solved online with a routing TSP might lead to improved performance. Finally, we are looking to apply our framework to POMDP problems where hindsight optimization have shown success (e.g learning to grasp \citet{Koval-RSS-14}).


\clearpage

\onecolumn

\appendix

\section{Analysis using a Hallucinating Oracle}

We begin with defining a clairvoyant oracle 
\begin{definition}[Clairvoyant Oracle]
Given a distribution of world map $P(\world)$, a clairvoyant oracle $\policyOR(\state, \world)$ is a policy that maps state $\state$ and world map $\world$ to a feasible action $\action \in \actionSetFeas{\state}{\world}$ such that it approximates the optimal MDP policy, $\policyOR \approx \policyMDP = \argmaxprob{\policy \in \policySet}\valuePol{\policy}$.
\end{definition}

Our approach is to imitate the oracle during training. This implies that we train a policy $\policyLEARN$ by solving the following optimization problem

\begin{equation}
\label{eq:imitateClairvoyantOracle}
\policyLEARN = \argmax\limits_{\policyBel \in \policySetBel} \expect{
\substack{t\sim U(1:T), \\
\state \sim P(\state \mid \policyBel, t), \\
\world \sim P(\world), \\
\belief \sim P(\belief \mid \world, \policyBel, t)}}
{\QFn{\policyOR}{T-t+1}(\state, \world, \policyBel(\state,\belief))}
\end{equation}

The learnt policy imitates a clairvoyant oracle that has access to more information (world map $\world$ compared to belief $\belief$). Hence, the realizability error of the policy is due to two terms - firstly the information mismatch and secondly the expressiveness of feature space. This realizability error can be hard to bound making it difficult to bound the performance of the learnt policy. This motivates us to introduce a hypothetical construct, a \emph{hallucinating oracle}, to alleviate the information mismatch.

\begin{definition}[Hallucinating Oracle]
Given a prior distribution of world map $P(\world)$ and rollouts by a policy $\policyBel$, a hallucinating oracle $\policyORBel$ computes the instantaneous posterior distribution over world maps and takes the action with the highest expected value. 
\begin{equation}
  \policyORBel = \argmax\limits_{\action \in \actionSet} \expect{\worldTwo \sim P(\worldTwo \mid \belief, \policyBel, t)}{\QFn{\policyOR}{T-t+1}(\state, \worldTwo, \action)}
\end{equation}
\end{definition}

We will now show that by imitating a clairvoyant oracle we effectively imitate the corresponding hallucinating oracle 

\begin{lemma}
The \textbf{offline} imitation of \textbf{clairvoyant} oracle (\ref{eq:imitateClairvoyantOracle}) is equivalent to sampling \textbf{online} a world from the posterior distribution and executing a \textbf{hallucinating} oracle as shown 

\begin{equation*}
\policyLEARN = \argmax\limits_{\policyBel \in \policySetBel} \expect{
\substack{t\sim U(1:T), \\
\state \sim P(\state \mid \policyBel, t), \\
\world \sim P(\world), \\
\belief \sim P(\belief \mid \world, \policyBel, t)}}
{\QFn{\policyORBel}{T-t+1}(\state, \world, \policyBel(\state,\belief))}
\end{equation*}

by using the fact that 
$\expect
{\world \sim P(\world)}
{\QFn{\policyOR}{T-t+1}(\state, \world, \action)} = 
\expect{
\substack{\world \sim P(\world),\\
\belief \sim P(\belief \mid \world, \policyBel, t)}}
{\QFn{\policyORBel}{T-t+1}(\state, \world, \action)}
$
\end{lemma}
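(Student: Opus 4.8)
The plan is to prove the key identity displayed at the end of the Lemma statement, namely that for any state $\state$, action $\action$, time $t$, and roll-in policy $\policyBel$,
\begin{equation*}
\expect{\world \sim P(\world)}{\QFn{\policyOR}{T-t+1}(\state, \world, \action)} = \expect{\substack{\world \sim P(\world),\\ \belief \sim P(\belief \mid \world, \policyBel, t)}}{\QFn{\policyORBel}{T-t+1}(\state, \world, \action)},
\end{equation*}
and then substitute it into the objective of (\ref{eq:imitateClairvoyantOracle}) to obtain the claimed equivalent form. The substitution step is purely mechanical: once the identity holds pointwise (for each fixed $\state$, $t$, and the action $\policyBel(\state,\belief)$ — though note the action depends on $\belief$, so a little care is needed there, see below), one plugs it into the nested expectation over $t \sim U(1{:}T)$, $\state \sim P(\state\mid\policyBel,t)$, $\world\sim P(\world)$, $\belief \sim P(\belief\mid\world,\policyBel,t)$ and the $\argmax$ over $\policyBel$ is taken of an identical objective, hence yields the same $\policyLEARN$.

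The core of the argument is the identity itself, which I would prove by manipulating the joint distribution over $(\world,\belief)$. First I would observe that the left-hand side does not involve $\belief$ at all, so I can freely introduce $\belief \sim P(\belief\mid\world,\policyBel,t)$ as a ``dummy'' variable: $\expect{\world}{\QFn{\policyOR}{T-t+1}(\state,\world,\action)} = \expect{\world}{\expect{\belief\mid\world}{\QFn{\policyOR}{T-t+1}(\state,\world,\action)}} = \expect{\world,\belief}{\QFn{\policyOR}{T-t+1}(\state,\world,\action)}$, where the joint is $P(\world)P(\belief\mid\world,\policyBel,t)$. Next I would reverse the order of conditioning using Bayes' rule: the same joint factors as $P(\belief\mid\policyBel,t)\,P(\world\mid\belief,\policyBel,t)$, where $P(\belief\mid\policyBel,t) = \int P(\world)P(\belief\mid\world,\policyBel,t)\,d\world$ is exactly the posterior-inducing marginal and $P(\world\mid\belief,\policyBel,t)$ is the instantaneous posterior over worlds appearing in the definition of the hallucinating oracle. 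Therefore
\begin{equation*}
\expect{\world,\belief}{\QFn{\policyOR}{T-t+1}(\state,\world,\action)} = \expect{\belief\sim P(\belief\mid\policyBel,t)}{\,\expect{\worldTwo\sim P(\worldTwo\mid\belief,\policyBel,t)}{\QFn{\policyOR}{T-t+1}(\state,\worldTwo,\action)}\,}.
\end{equation*}
The inner conditional expectation is, by the definition of the hallucinating oracle's value, precisely $\QFn{\policyORBel}{T-t+1}(\state,\belief,\action)$ — here one needs the fact that $\policyORBel$'s $Q$-value at belief $\belief$ is defined as the posterior-averaged clairvoyant $Q$-value (this should be spelled out; the paper defines $\policyORBel$ as the $\argmax$ over $\action$ of that quantity, so its associated $Q$-function is the averaged one). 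Finally I would fold $\belief$ back out to the joint $P(\world)P(\belief\mid\world,\policyBel,t)$ to match the right-hand side's outer expectation, giving the identity.

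The main obstacle I anticipate is bookkeeping around the action argument of $Q$ and the switch between ``$\QFn{\policyORBel}{}(\state,\belief,\action)$'' (a function of belief) and ``$\QFn{\policyORBel}{}(\state,\world,\action)$'' (a function of world) that appears in the final displayed equation — the statement writes the hallucinating oracle's $Q$ with a world argument, which only makes sense once one re-expands the posterior expectation back over $(\world,\belief)$, so the notation is really shorthand for ``the hallucinating oracle's value, re-averaged''. I would be careful to state that $\QFn{\policyORBel}{T-t+1}(\state,\world,\policyBel(\state,\belief))$ in the final expression is by convention $\expect{\worldTwo\sim P(\worldTwo\mid\belief,\policyBel,t)}{\QFn{\policyOR}{T-t+1}(\state,\worldTwo,\policyBel(\state,\belief))}$, which makes the substitution clean. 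A secondary subtlety is that $\action = \policyBel(\state,\belief)$ depends on $\belief$, so the ``dummy variable'' introduction of $\belief$ on the left-hand side is not quite innocuous; however, this is fine because in (\ref{eq:imitateClairvoyantOracle}) the expectation is already over the joint $(\world,\belief)$ with $\belief\sim P(\belief\mid\world,\policyBel,t)$, so I would apply the identity in the form ``$\expect{\world,\belief}{\QFn{\policyOR}{}(\state,\world,g(\belief))} = \expect{\world,\belief}{\QFn{\policyORBel}{}(\state,\world,g(\belief))}$ for any fixed measurable $g$'', which follows by conditioning on $\belief$ first and applying the scalar identity to each fixed value $g(\belief)$ — the per-$\belief$ argument above goes through verbatim since it held for arbitrary fixed $\action$.
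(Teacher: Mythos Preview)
Your proposal is correct and follows essentially the same approach as the paper. The paper's proof defines $\QFn{\policyORBel}{T-t+1}(\state,\world,\action) \coloneqq \expect{\worldTwo \sim P(\worldTwo\mid\belief,\policyBel,t)}{\QFn{\policyOR}{T-t+1}(\state,\worldTwo,\action)}$ (confirming your interpretation of the notation), then takes the $(\world,\belief)$ expectation of both sides and reduces the resulting triple sum over $(\world,\belief,\worldTwo)$ via two applications of Bayes' rule and marginalization---exactly the tower/change-of-conditioning argument you outline, just written out explicitly as sums rather than as nested conditional expectations. Your treatment of the $\action=\policyBel(\state,\belief)$ dependence is in fact more careful than the paper's, which proves the identity for fixed $\action$ and then substitutes without comment.
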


\begin{proof}
Given a belief $\belief$ and rollout policy $\policyBel$, the action value function for the hallucinating oracle $\QFn{\policyORBel}{T-t+1}(\state, \world, \action)$ is defined as 
\begin{equation}
	\label{eq:hallucinatingQ}
	\QFn{\policyORBel}{T-t+1}(\state, \world, \action) = \expect{\worldTwo \sim P(\worldTwo \mid \belief, \policyBel, t)}{\QFn{\policyOR}{T-t+1}(\state, \worldTwo, \action)}
\end{equation}

We will now show $\expect
{\world \sim P(\world)}
{\QFn{\policyOR}{T-t+1}(\state, \world, \action)} = 
\expect{
\substack{\world \sim P(\world),\\
\belief \sim P(\belief \mid \world, \policyBel, t)}}
{\QFn{\policyORBel}{T-t+1}(\state, \world, \action)}
$. 

Taking expectation $\expect{
\substack{\world \sim P(\world),\\
\belief \sim P(\belief \mid \world, \policyBel, t)}}
{.}$ on both sides of  (\ref{eq:hallucinatingQ}) we have
\begin{equation}
	\label{eq:middlestep}
		\expect{\substack{\world \sim P(\world),\\\belief \sim P(\belief \mid \world, \policyBel, t)}}{\QFn{\policyORBel}{T-t+1}(\state, \world, \action)}  = 
	\expect{\substack{\world \sim P(\world),\\\belief \sim P(\belief \mid \world, \policyBel, t)}}{\expect{\worldTwo \sim P(\worldTwo \mid \belief, \policyBel, t)}{\QFn{\policyOR}{T-t+1}(\state, \worldTwo, \action)}} 
\end{equation}

Evaluating the right hand side of (\ref{eq:middlestep})

\begin{equation*}
	\begin{aligned}
	& 
	\expect{\substack{\world \sim P(\world),\\\belief \sim P(\belief \mid \world, \policyBel, t)}}{\expect{\worldTwo \sim P(\worldTwo \mid \belief, \policyBel, t)}{\QFn{\policyOR}{T-t+1}(\state, \worldTwo, \action)}} &  \\
	&= \sum\limits_{\world} \sum\limits_{\belief} \sum\limits_{\worldTwo} P(\world)  P(\belief \mid \world, \policyBel, t)  P(\worldTwo \mid \belief, \policyBel, t) \QFn{\policyOR}{T-t+1}(\state, \worldTwo, \action) & \\
	&= \sum\limits_{\world} \sum\limits_{\belief} \sum\limits_{\worldTwo}  P(\world) 
	\frac{P(\world \mid \belief, \policyBel, t) P(\belief \mid \policyBel, t)}{P(\world \mid \policyBel, t)} 
	\frac{P(\belief \mid \worldTwo, \policyBel, t) P(\worldTwo \mid \policyBel, t)}{P(\belief \mid \policyBel, t)} 
	\QFn{\policyOR}{T-t+1}(\state, \worldTwo, \action) & \text{(Bayes rule)} \\
	&= \sum\limits_{\world} \sum\limits_{\belief} \sum\limits_{\worldTwo}  
	P(\world \mid \belief, \policyBel, t) 
	P(\belief \mid \worldTwo, \policyBel, t) P(\worldTwo )
	\QFn{\policyOR}{T-t+1}(\state, \worldTwo, \action) & \text{(Cond. Indep.)} \\
	&= \sum\limits_{\world} P(\worldTwo) \QFn{\policyOR}{T-t+1}(\state, \worldTwo, \action) 
	\sum\limits_{\belief} \sum\limits_{\worldTwo}  
	P(\world \mid \belief, \policyBel, t) 
	P(\belief \mid \worldTwo, \policyBel, t) P(\worldTwo ) \\
	&= \sum\limits_{\world} P(\worldTwo) \QFn{\policyOR}{T-t+1}(\state, \worldTwo, \action)  & (\text{Marginalizing } \world, \belief) \\
	&=\expect
{\world \sim P(\world)}
{\QFn{\policyOR}{T-t+1}(\state, \world, \action)} \\
	\end{aligned}
\end{equation*}

This implies that 
\begin{equation*}
\expect{
\substack{t\sim U(1:T), \\
\state \sim P(\state \mid \policyBel, t), \\
\world \sim P(\world), \\
\belief \sim P(\belief \mid \world, \policyBel, t)}}
{\QFn{\policyORBel}{T-t+1}(\state, \world, \policyBel(\state,\belief))}
=
\expect{
\substack{t\sim U(1:T), \\
\state \sim P(\state \mid \policyBel, t), \\
\world \sim P(\world), \\
\belief \sim P(\belief \mid \world, \policyBel, t)}}
{\QFn{\policyOR}{T-t+1}(\state, \world, \policyBel(\state,\belief))}
\end{equation*}

\end{proof}

\section{Mapping to \aggrevate analysis}

Now that we have established that we are imitating a reference policy that is not clairvoyant (hallucinating oracle), we can invoke the analysis used by Ross \etal \cite{ross2014reinforcement} for the algorithm \aggrevate. A key difference is that \aggrevate defines the problem as minimizing cost to go, while we define it as maximizing reward to go. Instead of re-deriving the analysis, we will instead define a term by term mapping between our framework and \aggrevate and then use the proven bounds. 

\begin{table*}[!htbp]
\centering
\caption{Mapping between \algQvalAgg and \aggrevate \label{tab:mapping}}
\begin{tabulary}{\textwidth}{LCC}\toprule
    {\bf Term}   & {\bf \algQvalAgg}         & {\bf \aggrevate}   \\ \midrule
    Objective  
    & 
    Reward: $\rewardFn{\state}{\world}{\action}$  
    & 
    Cost: $\rewardFnAgg{\state}{\world}{\action}$
    \\ 
    &
    &
    $\rewardFnAgg{\state}{\world}{\action} = 1 - \rewardFn{\state}{\world}{\action}$
    \\ \midrule
    \multicolumn{1}{m{2cm}}{State Value Function} 
    &
     $\valueFn{\policyBel}{T}(\state, \world) = \sum\limits_{i=1}^{T} \expect{\state_i \sim P(\state' \mid \state, \policyBel, i) }{\rewardFn{\state_i}{\world}{\policyBel(\state_i,\world)}}$ 
     & 
	$\valueFnAgg{\policyBel}{T}(\state, \world) = \sum\limits_{i=1}^{T} \expect{\state_i \sim P(\state' \mid \state, \policyBel, i) }{\rewardFnAgg{\state_i}{\world}{\policyBel(\state_i,\world)}}$ 
	\\
	&
	&
	$\valueFnAgg{\policyBel}{T}(\state, \world) = T - \valueFn{\policyBel}{T}(\state, \world)$
	\\ \midrule
    \multicolumn{1}{m{2cm}}{Action Value Function} 
    &
    $\QFn{\policyBel}{T}(\state, \world, \action)$
    &
	$\QFnAgg{\policyBel}{T}(\state, \world, \action) = T - \QFn{\policyBel}{T}(\state, \world, \action)$
	\\ \midrule
    \multicolumn{1}{m{2cm}}{Policy Value}  
    & 
    $\valuePol{\policyBel} = \expect{\state \sim P(\state), \world \sim P(\world)}{\valueFn{\policyBel}{T}(\state, \world)}$
    & 
    $\valuePolAgg{\policyBel} = \expect{\state \sim P(\state), \world \sim P(\world)}{\valueFn{\policyBel}{T}(\state, \world)}$
    \\ 
    &
    &
    $\valuePolAgg{\policyBel} = T - \valuePol{\policyBel}$
    \\ \midrule
        \multicolumn{1}{m{2cm}}{Loss of Policy}  
    &
    $
    \begin{aligned}
    & \lossi(\policyBel) =  
    \mathbb{E}_{
	\substack{
    t\sim U(1:T), \\
	\state \sim P(\state \mid \policyBel, t), \\
	\world \sim P(\world),\\
	\belief \sim P(\belief \mid \world, \policyBel, t)
	}
    }
    \\
    & 
    \left[
	\max\limits_{\action \in \actionSet} \QFn{\policyORBel}{T-t+1}(\state, \world, \action) - \QFn{\policyORBel}{T-t+1}(\state, \world, \policy(\state,\belief))
\right]
    \end{aligned}
    $
    &
     $
    \begin{aligned}
    & \lossiAgg(\policyBel) =  
    \mathbb{E}_{
	\substack{
    t\sim U(1:T), \\
	\state \sim P(\state \mid \policyBel, t), \\
	\world \sim P(\world),\\
	\belief \sim P(\belief \mid \world, \policyBel, t)
	}
    }
    \\
    & 
    \left[
\QFnAgg{\policyORBel}{T-t+1}(\state, \world, \policyBel) - 
\min\limits_{\action \in \actionSet} \QFnAgg{\policyORBel}{T-t+1}(\state, \world, \action) 
\right]
    \end{aligned}
    $
    \\ 
    &
    &
    $\lossiAgg = \lossi$
    \\
    \midrule
    \multicolumn{1}{m{2cm}}{Classification Error of Best Policy}  
    &
    $
    \begin{aligned}
    & \errclass = \min\limits_{\policyBel \in \policySetBel} \frac{1}{N} \sum\limits_{i=1}^N 
    \mathbb{E}_{
	\substack{
    t\sim U(1:T), \\
	\state \sim P(\state \mid \policyBel, t), \\
	\world \sim P(\world),\\
	\belief \sim P(\belief \mid \world, \policyBel, t)
	}
    }
    \\
    & 
    \left[
	\max\limits_{\action \in \actionSet} \QFn{\policyORBel}{T-t+1}(\state, \world, \action) - 
\QFn{\policyORBel}{T-t+1}(\state, \world, \policyBel)
\right]
    \end{aligned}
    $
    &
    $
    \begin{aligned}
    & \errclassAgg = \min\limits_{\policyBel \in \policySetBel} \frac{1}{N} \sum\limits_{i=1}^N 
    \mathbb{E}_{
	\substack{
    t\sim U(1:T), \\
	\state \sim P(\state \mid \policyBel, t), \\
	\world \sim P(\world),\\
	\belief \sim P(\belief \mid \world, \policyBel, t)
	}
    }
    \\
    & 
    \left[
\QFnAgg{\policyORBel}{T-t+1}(\state, \world, \policyBel) - 
\min\limits_{\action \in \actionSet} \QFnAgg{\policyORBel}{T-t+1}(\state, \world, \action) 
\right]
    \end{aligned}
    $
    \\
    &
    &
    $\errclassAgg = \errclass$
    \\ \midrule
    \multicolumn{1}{m{2cm}}{Regret}  
    &
    $\errreg = \frac{1}{N} \left[ \sum\limits_{i=1}^N \lossi (\hat{\policy}_i) - \min\limits_{\policy \in \policySetBel} \sum\limits_{i=1}^N \lossi (\policy) \right]
    $
    &
    $\errregAgg = \frac{1}{N} \left[ \sum\limits_{i=1}^N \lossiAgg (\hat{\policy}_i) - \min\limits_{\policy \in \policySetBel} \sum\limits_{i=1}^N \lossiAgg (\policy) \right]
    $
    \\
   &
    &
    $\errregAgg = \errreg$
    \\
    \bottomrule
\end{tabulary}
\label{tab:learning_details}
\end{table*}

The \aggrevate proofs also defines a bound $\overbar{Q}_\mathrm{max}^*$ which is the performance bound of reference policy. This is mapped as follows
\begin{equation}
	\label{eq:constant}
	\begin{aligned}
	\overbar{Q}_\mathrm{max}^* &= \max\limits_{(\state, \action, t)} t - \QFn{\policyORBel}{t}(\state, \world, \action) \\
							   &\leq T
	\end{aligned}
\end{equation}

\section{Performance Bounds from \aggrevate}

\begin{lemma}
N iterations of \algQvalAgg for the infinite sample case

\begin{equation*}
	\valuePol{\policyLEARN} \geq \valuePol{\policyORBel} - T[\errclass + \errreg] - O \left(\frac{T^2 \log T}{\alpha N}\right)
\end{equation*}
\end{lemma}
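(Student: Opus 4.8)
The plan is to reduce the claim to the known infinite-sample performance guarantee of \aggrevate \cite{ross2014reinforcement}, using the term-by-term dictionary between the reward-maximization formulation here and the cost-minimization formulation there that is assembled in Table~\ref{tab:mapping}.

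First I would use the preceding Lemma: in expectation over the roll-in--induced belief distribution, offline imitation of the clairvoyant oracle (\ref{eq:imitateClairvoyantOracle}) coincides with imitation of the hallucinating oracle $\policyORBel$. This lets us view the $\policyLEARN$ returned by \algQvalAgg as the output of \aggrevate run with reference policy $\policyORBel$, roll-in policy $\policyMix = \mixfrac{i}\policyOR + (1-\mixfrac{i})\policyLEARN_i$, and cost-to-go signal $\QFn{\policyORBel}{T-t+1}$. The only structural difference is the sign convention: define $\rewardFnAgg{\state}{\world}{\action} = 1 - \rewardFn{\state}{\world}{\action}$, so that $\valueFnAgg{\policyBel}{T} = T - \valueFn{\policyBel}{T}$, $\QFnAgg{\policyBel}{T} = T - \QFn{\policyBel}{T}$, and $\valuePolAgg{\policyBel} = T - \valuePol{\policyBel}$. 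Because every such map is a translation by the \emph{same} constant $T$ independent of the action, the per-step imitation loss, the best-in-class regression error, and the accumulated online-learning regret are all preserved: $\lossiAgg = \lossi$, $\errclassAgg = \errclass$, $\errregAgg = \errreg$, as recorded in Table~\ref{tab:mapping}.

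Next I would pin down the constant in the \aggrevate bound: the worst-case cost-to-go $\overbar{Q}_\mathrm{max}^* = \max_{(\state,\action,t)} \left(t - \QFn{\policyORBel}{t}(\state,\world,\action)\right)$ is at most $T$ since each one-step reward lies in $[0,1]$ (equation (\ref{eq:constant})). Invoking the infinite-sample, no-regret version of the \aggrevate theorem (with the standard mixing schedule $\mixfrac{i}$, whose averaged mass is $O(1/(\alpha N))$) gives
\[
\valuePolAgg{\policyLEARN} \leq \valuePolAgg{\policyORBel} + T[\errclassAgg + \errregAgg] + O\left(\frac{\overbar{Q}_\mathrm{max}^*\, T \log T}{\alpha N}\right).
\]
Substituting $\valuePolAgg{\policyLEARN} = T - \valuePol{\policyLEARN}$, $\valuePolAgg{\policyORBel} = T - \valuePol{\policyORBel}$, $\errclassAgg = \errclass$, $\errregAgg = \errreg$, $\overbar{Q}_\mathrm{max}^* \leq T$, cancelling the common $T$, and rearranging yields exactly
\[
\valuePol{\policyLEARN} \geq \valuePol{\policyORBel} - T[\errclass + \errreg] - O\left(\frac{T^2 \log T}{\alpha N}\right).
\]

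The main obstacle is bookkeeping rather than analysis: I must check that the roll-in state--belief distribution generated by Algorithm~\ref{alg:Agg} matches the one \aggrevate assumes, that the cost-sensitive-classification-to-no-regret reduction defining $\errclass$ and $\errreg$ is the same reduction used in \cite{ross2014reinforcement}, and --- the subtlest point --- that the equivalence from the preceding Lemma transfers not only the objective value but also the loss, error, and regret quantities, given that $\policyORBel$ itself depends on the belief. Once these identifications are verified, the stated bound is an immediate corollary of the cited theorem.
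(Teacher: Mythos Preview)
Your proposal is correct and follows essentially the same route as the paper: invoke Theorem~2.1 of \cite{ross2014reinforcement} in its cost-minimization form, translate via the dictionary in Table~\ref{tab:mapping} (in particular $\valuePolAgg{\cdot} = T - \valuePol{\cdot}$, $\errclassAgg = \errclass$, $\errregAgg = \errreg$) and the bound $\overbar{Q}_\mathrm{max}^* \leq T$ from (\ref{eq:constant}), then rearrange. Your additional remarks about why the loss and regret quantities are preserved under the translation, and about the role of the preceding Lemma in justifying $\policyORBel$ as the reference policy, are accurate and simply make explicit what the paper leaves implicit.
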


\begin{proof}
Theorem 2.1 in \cite{ross2014reinforcement} states
\begin{equation*}
	\valuePolAgg{\policyLEARN} \leq \valuePolAgg{\policyORBel} + T[\errclassAgg + \errregAgg] + O \left(\frac{\overbar{Q}_\mathrm{max}^* T \log T}{\alpha N}\right)
\end{equation*}

Substituting terms from Table.~\ref{tab:learning_details} and (\ref{eq:constant}) we get

\begin{equation*}
	\begin{aligned}
	\valuePolAgg{\policyLEARN} & \leq \valuePolAgg{\policyORBel} + T[\errclassAgg + \errregAgg] + O \left(\frac{\overbar{Q}_\mathrm{max}^* T \log T}{\alpha N}\right) \\
	T - \valuePol{\policyLEARN} & \leq T - \valuePol{\policyORBel} + T[\errclassAgg + \errregAgg] + O \left(\frac{T^2 \log T}{\alpha N}\right) \\
	\valuePol{\policyLEARN} & \geq \valuePol{\policyORBel} - T[\errclass + \errreg] - O \left(\frac{T^2 \log T}{\alpha N}\right)
	\end{aligned}
\end{equation*}

\end{proof}

\begin{lemma}
\label{lemma:main}
N iterations of \algQvalAgg, collecting $m$ regression examples per iteration guarantees that with probability at least $1-\delta$
\begin{equation*}
  \valuePol{\policyLEARN} \geq \valuePol{\policyORBel} - 2\sqrt{\abs{\actionSet}}T\sqrt{\hat{\errclass} + \hat{\errreg} 
  + O \left(\sqrt{\log \left(\nicefrac{ \left( \nicefrac{1}{\delta} \right)}{Nm}\right)} \right)} - O \left(\frac{T^2 \log T}{\alpha N}\right)
\end{equation*}
where $\hat{\errreg}$ is the empirical average online learning regret on the training regression examples collected over the iterations and 
$\hat{\errclass}$ is the empirical regression regret of the best regressor in the policy class.
\end{lemma}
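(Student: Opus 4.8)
The plan is to proceed exactly as in the proof of the preceding (infinite-sample) lemma: quote the finite-sample guarantee for \aggrevate from Ross \etal \cite{ross2014reinforcement} and translate it through the term-by-term correspondence of Table~\ref{tab:learning_details}. The only ingredients beyond the infinite-sample case are (i) the reduction from the cost-sensitive classification objective to the squared-loss regression objective that the algorithm actually optimizes, which produces the $2\sqrt{\abs{\actionSet}}$ factor and the outer square root, and (ii) the passage from expected losses to empirical estimates over the $Nm$ collected examples, which produces the additive $O(\sqrt{\log((\nicefrac{1}{\delta})/(Nm))})$ term.

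First I would invoke the finite-sample counterpart of Theorem~2.1 in \cite{ross2014reinforcement} (their Theorem~2.2), which states that after $N$ iterations collecting $m$ examples per iteration, with probability at least $1-\delta$,
\[
\valuePolAgg{\policyLEARN} \leq \valuePolAgg{\policyORBel} + 2\sqrt{\abs{\actionSet}}\,T\sqrt{\hat{\errclassAgg} + \hat{\errregAgg} + O\!\left(\sqrt{\tfrac{\log(\nicefrac{1}{\delta})}{Nm}}\right)} + O\!\left(\frac{\overbar{Q}_\mathrm{max}^*\,T\log T}{\alpha N}\right).
\]
The factor $\sqrt{\abs{\actionSet}}$ inside the root arises from the standard regression-to-classification reduction: a policy that picks the action minimizing a regressed cost-to-go vector has classification regret at most $2\sqrt{\abs{\actionSet}\,\varepsilon}$ when the regressor's per-example squared error is $\varepsilon$, by Cauchy--Schwarz over the $\abs{\actionSet}$ action coordinates. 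The additive concentration term comes from an Azuma--Hoeffding bound on the martingale difference sequence formed by the $Nm$ per-example losses; these are not i.i.d.\ across iterations because the roll-in distribution induced by $\policyMix$ changes with the iteration index, but each summand is bounded (one-step rewards lie in $[0,1]$ and value-to-go in $[0,T]$), so the bounded-difference martingale concentration applies to the whole length-$Nm$ sequence at once and yields the stated $O(\sqrt{\log(\nicefrac{1}{\delta})/(Nm)})$ deviation without any union bound over iterations.

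Then I would substitute the mapping entries from Table~\ref{tab:learning_details}: $\valuePolAgg{\policyBel} = T - \valuePol{\policyBel}$ (for both $\policyLEARN$ and $\policyORBel$), $\hat{\errclassAgg} = \hat{\errclass}$, $\hat{\errregAgg} = \hat{\errreg}$, together with $\overbar{Q}_\mathrm{max}^* \leq T$ from (\ref{eq:constant}). After the $T$'s cancel and the inequality is negated, the cost-minimization statement becomes exactly the claimed reward-maximization bound, completing the argument in the same way the previous lemma followed from Theorem~2.1.

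The main obstacle is purely bookkeeping: I must confirm that the regression-to-classification reduction and the horizon dependence compose precisely as \aggrevate states --- so that the prefactor is $2\sqrt{\abs{\actionSet}}\,T$ with the square root wrapping exactly $\hat{\errclass} + \hat{\errreg}$ plus the concentration term, rather than some other grouping --- and that our additional layer of expectation over $\world \sim P(\world)$ and $\belief \sim P(\belief \mid \world, \policyBel, t)$, which every loss and error entry in Table~\ref{tab:learning_details} already carries, plays the role of \aggrevate's roll-in distribution without changing any constant. Since those expectations enter in exactly the pattern \aggrevate expects, this check is mechanical and introduces no genuinely new estimate.
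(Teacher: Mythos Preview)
Your proposal is correct and follows essentially the same route as the paper: quote Theorem~2.2 of \cite{ross2014reinforcement}, then substitute the entries of Table~\ref{tab:learning_details} together with $\overbar{Q}_\mathrm{max}^* \leq T$ from (\ref{eq:constant}), cancel the $T$'s, and flip the inequality. Your additional commentary on the regression-to-classification reduction and the Azuma--Hoeffding concentration is extra exposition beyond what the paper writes (it simply black-boxes these inside the cited theorem), but it is consistent and does not change the argument.
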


\begin{proof}
Theorem 2.2 in \cite{ross2014reinforcement} states
\begin{equation*}
	\valuePolAgg{\policyLEARN}  \leq \valuePolAgg{\policyORBel} + 
	2\sqrt{\abs{\actionSet}}T\sqrt{\hat{\errclassAgg} + \hat{\errregAgg} 
	+ O \left(\sqrt{\log \left(\nicefrac{ \left( \nicefrac{1}{\delta} \right)}{Nm}\right)} \right)}
	 + O \left(\frac{\overbar{Q}_\mathrm{max}^* T \log T}{\alpha N}\right)
\end{equation*}

Substituting terms from Table.~\ref{tab:learning_details} and (\ref{eq:constant}) we get

\begin{equation*}
	\begin{aligned}
	\valuePolAgg{\policyLEARN} & \leq \valuePolAgg{\policyORBel} + 
	2\sqrt{\abs{\actionSet}}T\sqrt{\hat{\errclassAgg} + \hat{\errregAgg} 
	+ O \left(\sqrt{\log \left(\nicefrac{ \left( \nicefrac{1}{\delta} \right)}{Nm}\right)} \right)}
	 + O \left(\frac{\overbar{Q}_\mathrm{max}^* T \log T}{\alpha N}\right) \\
	T - \valuePol{\policyLEARN} & \leq T - \valuePol{\policyORBel} + 
	2\sqrt{\abs{\actionSet}}T\sqrt{\errclass + \errreg 
	+ O \left(\sqrt{\log \left(\nicefrac{ \left( \nicefrac{1}{\delta} \right)}{Nm}\right)} \right)}
	 + O \left(\frac{T^2 \log T}{\alpha N}\right) \\
  \valuePol{\policyLEARN} & \geq \valuePol{\policyORBel} - 2\sqrt{\abs{\actionSet}}T\sqrt{\hat{\errclass} + \hat{\errreg} 
  + O \left(\sqrt{\log \left(\nicefrac{ \left( \nicefrac{1}{\delta} \right)}{Nm}\right)} \right)} - O \left(\frac{T^2 \log T}{\alpha N}\right)	\end{aligned}
\end{equation*}

\end{proof}

\section{Imitation of Clairvoyant One-Step-Reward}

\begin{lemma}
The performance of the hallucinating oracle $\policyORBel$ is near-optimal w.r.t the optimal policy $\policy^*$. 
\begin{equation*}
\valuePol{\policyORBel} \geq \left(1 - \frac{1}{e}\right) \valuePol{\policy^*}
\end{equation*}
\end{lemma}

\begin{proof}

Problem \hiddenunc is maximization of an adaptive submodular function subject to cardinality constraints. An apdative greedy approach to solving this problem has a near-optimality guarantee as proved by Golovin and Krause \cite{golovin2011adaptive}. We will establish an equivalence between the hallucinating one-step-reward oracle and the greedy algorithm.

The greedy algorithm selects a node to visit that has the highest expected marginal gain under the conditional distribution of world maps given the observations received so far. If the history of vertices visited and measurements received are $\{ \vertex_j \}_{j=1}^{i-1}$ and 
$\{ \meas_j \}_{j=1}^{i-1}$, the greedy algorithm selects element $\vertex_i$ with the highest expected marginal gain
\begin{equation}
\label{eq:adaptive_greedy}
\vertex_{i} = \argmaxprob{\vertex \in \vertexSet} \expect{\world \sim P(\world | \{ \vertex_j \}_{j=1}^{i-1}, \{ \meas_j \}_{j=1}^{i-1} )}{\marginalGain{\vertex | \{ \vertex_j \}_{j=1}^{i-1} }{\world}}
\end{equation}

Golovin and Krause \cite{golovin2011adaptive} prove the following: Let $\utilityFnDef$ be a utility function satisfying the properties of a non-negative adaptive montonone and adaptive submodular set function with respect to distribution $P(\world)$.
Let $\adaptivePath^*$ be the optimal solution to Problem \hiddenunc. 
Let $\adaptivePathGreedy$ be the solution obtained by applying the greedy algorithm (\ref{eq:adaptive_greedy}). 
Let $\expect{\world \sim P(\world)}{\utilityFn{\adaptivePath}{\world}}$ be the expected utility of a policy when evaluated on a distribution of world maps $P(\world)$.
Then the following guarantee holds
\begin{equation*}
  \expect{\world \sim P(\world)}{\utilityFn{\adaptivePathGreedy}{\world}} \geq \left( 1 - e^{-1}\right) \expect{\world \sim P(\world)}{\utilityFn{\adaptivePath^*}{\world}}
\end{equation*}

We now show the equivalence of hallucinating one-step-reward oracle to greedy algorithm. The hallucinating one-step-reward oracle is

\begin{equation}
  \policyORBel = \argmax\limits_{\action \in \actionSet} \expect{\worldTwo \sim P(\worldTwo \mid \belief, \policyBel, t)}{\rewardFn{\state}{\worldTwo}{\action}}
\end{equation}

This can be expanded as 

\begin{equation*}
\begin{aligned}
  \policyORBel &= \argmax\limits_{\action \in \actionSet} \expect{\worldTwo \sim P(\worldTwo \mid \belief, \policyBel, t)}{\rewardFn{\state}{\worldTwo}{\action}} \\
  &= \argmax\limits_{\action \in \actionSet} \expect{\worldTwo \sim P(\worldTwo \mid \belief, \policyBel, t)}{ \frac{\marginalGain{\action \mid \state}{\worldTwo}}{\utilityFn{\actionSet}{\worldTwo}}  } \\
  & = \argmax\limits_{\action \in \actionSet} \expect{\worldTwo \sim P(\worldTwo \mid \belief, \policyBel, t)}{ \marginalGain{\action \mid \state}{\worldTwo} }
  \end{aligned}
\end{equation*}

Since $\state = \{ \vertex_j \}_{j=1}^{t}$, $\actionSet = \vertexSet$ and $\belief_t = \{(\vertex_j, \meas_j)\}_{j=1}^{t}$

\begin{equation*}
\begin{aligned}
  \policyORBel &= \argmax\limits_{\vertex \in \vertexSet} \expect{\worldTwo \sim P(\worldTwo | \{ \vertex_j \}_{j=1}^{t}, \{ \meas_j \}_{j=1}^{t} ) }{ \marginalGain{\vertex \mid  \{ \vertex_j \}_{j=1}^{t} }{\worldTwo} }
  \end{aligned}
\end{equation*}

Hence the $\policyORBel$ is equivalent to the greedy algorithm and we get the following guarantee

\begin{equation*}
\valuePol{\policyORBel} \geq \left(1 - \frac{1}{e}\right) \valuePol{\policy^*}
\end{equation*}

\end{proof}

Now proving the regret guarantee is a straightforward conjunction of lemmas

\begin{theorem}
N iterations of \algRewAgg, collecting $m$ regression examples per iteration guarantees that with probability at least $1-\delta$
\begin{equation*}
\begin{aligned}
  \valuePol{\policyLEARN} \geq & \left(1 - \frac{1}{e}\right) \valuePol{\policy^*} \\
  & - 2\sqrt{\abs{\actionSet}}T\sqrt{\errclass + \errreg 
  + O \left(\sqrt{\log \left(\nicefrac{ \left( \nicefrac{1}{\delta} \right)}{Nm}\right)} \right)} \\
  & - O \left(\frac{T^2 \log T}{\alpha N}\right)
\end{aligned}
\end{equation*}
where $\errreg$ is the empirical average online learning regret on the training regression examples collected over the iterations and 
$\errclass$ is the empirical regression regret of the best regressor in the policy class.
\end{theorem}

\begin{proof}
In Lemma \ref{lemma:main} use $\valuePol{\policyORBel} \geq \left(1 - \frac{1}{e}\right) \valuePol{\policy^*}$.
\end{proof}

\section{Imitation of Clairvoyant Reward-To-Go}

\begin{theorem}
N iterations of \algQvalAgg, collecting $m$ regression examples per iteration guarantees that with probability at least $1-\delta$
\begin{equation*}
\begin{aligned}
  \valuePol{\policyLEARN} \geq & \valuePol{\policyORBel} \\
  & - 2\sqrt{\abs{\actionSet}}T\sqrt{\errclass + \errreg 
  + O \left(\sqrt{\log \left(\nicefrac{ \left( \nicefrac{1}{\delta} \right)}{Nm}\right)} \right)} \\
  & - O \left(\frac{T^2 \log T}{\alpha N}\right)
\end{aligned}
\end{equation*}
where $\errreg$ is the empirical average online learning regret on the training regression examples collected over the iterations and 
$\errclass$ is the empirical regression regret of the best regressor in the policy class.
\end{theorem}

\begin{proof}
This is same as Lemma \ref{lemma:main} 
\end{proof}

\clearpage

\twocolumn

\bibliographystyle{plainnat}
\bibliography{all}

\end{document}